\def\eqref#1{equation~\ref{#1}}
\def\1{\bm{1}}
\DeclareMathAlphabet{\mathsfit}{\encodingdefault}{\sfdefault}{m}{sl}
\SetMathAlphabet{\mathsfit}{bold}{\encodingdefault}{\sfdefault}{bx}{n}
\newcommand{\ours}{\textsc{Ours}}
\newtheorem{theorem}{Theorem}
\newtheorem{proposition}{Proposition}
\newtheorem{remark}{Remark}
\newcommand{\probP}{\mathbb{P}}
\newtheorem{definition}{Definition}
\author{Praneeth Vepakomma $^*$
}
\author{Kaustubh Ponkshe\ast}
\newtheorem{lemma}[theorem]{Lemma}
\title{Power Mechanism: Private Tabular Representation Release
for Model Agnostic Consumption}
\author{\name Praneeth Vepakomma \email vepakom@mit.edu \\
      \addr MBZUAI, MIT
      \AND
      \name Kaustubh Ponkshe \email kaustubh.ponkshe@epfl.ch \\
      \addr EPFL
      }
\begin{document}

\maketitle
\def\thefootnote{*}\footnotetext{These authors contributed equally to this work}\def\thefootnote{\arabic{footnote}}
Authors contributed equally. Order determined by coin flip.\footnote{normal footnote}
\begin{abstract}Traditional collaborative learning approaches are based on sharing of model weights between clients and a server. However, there are advantages to resource efficiency through schemes based on sharing of embeddings (activations) created from the data. Several differentially private methods were developed for sharing of weights while such mechanisms do not exist so far for sharing of embeddings. We propose {\ours} to learn a privacy encoding network in conjunction with a small utility generation network such that the final embeddings generated from it are equipped with formal differential privacy guarantees. These privatized embeddings are then shared with a more powerful server, that learns a post-processing that results in a higher accuracy for machine learning tasks. We show that our co-design of collaborative and private learning results in requiring only one round of privatized communication and lesser compute on the client than traditional methods. The privatized embeddings that we share from the client are agnostic to the type of model (deep learning, random forests or XGBoost) used on the server in order to process these activations to complete a task.
\end{abstract} 

\section{Introduction}Modern privacy-preserving machine learning methods, exemplified by approaches like DP-SGD \cite{abadi2016deep}, focus on protecting privacy by adding noise to model weights during training. However, in many real-world scenarios, organizations need to share intermediate data representations (activations or embeddings) rather than model weights and yet no formal privacy guarantees exist for such sharing. This work addresses this gap by introducing a mechanism for clients to privately share data embeddings as opposed to model weights while maintaining formal differential privacy guarantees \cite{dwork2008differential,dwork2014algorithmic}. This flexibility is in contrast to existing private federated learning approaches \cite{bhowmick2018protection}, which require clients to train complete models locally and share privatized weights. The proposed approach addresses privacy challenges in split learning architectures while preserving their benefits in computational efficiency. Specifically, we develop a principled approach for private activation sharing through a careful co-design of collaborative and private learning mechanisms. This design provides formal differential privacy guarantees while enabling efficient distributed computation.
\subsection{Approach}To detail our approach, we begin with the most basic scenario in privacy-preserving distributed learning: a single client seeking to securely offload the majority of training computation to a server while maintaining data privacy. In this context, we propose Power Learning, a mechanism that creates privacy-preserving embeddings. The figure \ref{mainFig} shows the high level functioning of our method 
\begin{figure*}[!htbp]
    \centering
\includegraphics[scale=0.48]{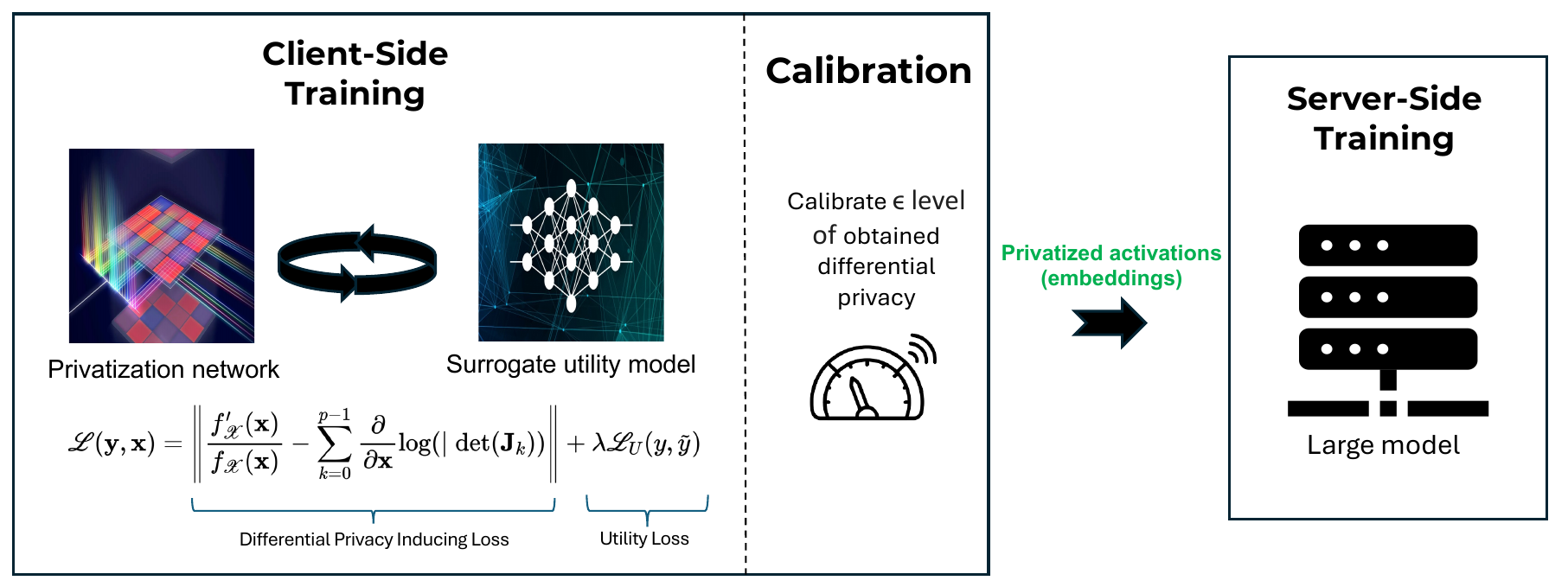}
    \centering
    \caption{Schematic illustration of the {\ours} for distributed and private learning, that theoretically calibrates and measures the obtained level of $\epsilon$ and $\delta$ for differential privacy. This calibration is done after the minimization of a specifically proposed privacy loss that is minimized in regularization with the machine learning utility loss.}
    \label{mainFig}
\end{figure*}
 based on Lipschitz privacy \cite{koufogiannis2015gradual} that enables us to quantify the privacy loss of data transformations through the properties of their gradients. This mathematical connection allows us to formulate privacy preservation as a differentiable loss. By incorporating this privacy loss as a regularizer alongside the standard utility objective, we create a joint optimization framework that simultaneously ensures both the privacy and utility of the generated embeddings. This co-optimization approach stands in contrast to previous methods that typically handle privacy and utility separately.  
Through extensive empirical evaluation, we demonstrate that our approach effectively prevents feature space hijacking attacks \cite{pasquini2021unleashing}, a significant vulnerability in traditional split learning systems. A key innovation of our method is its ability to calibrate privacy levels for individual samples during the embedding generation process, allowing for fine-grained privacy control based on data sensitivity. The framework achieves these privacy guarantees while maintaining high resource efficiency. Unlike existing approaches that require multiple rounds of communication, our method requires only a single round of communication with a compact payload, significantly reducing the client's computational and communication overhead. This efficiency is achieved without compromising the privacy guarantees or utility of the learned representations.

\subsubsection{Privatized Tabular Data Sharing}
Our approach enables clients with sensitive tabular data to create private embeddings that can be safely shared with a more computationally powerful server. A key advantage of our method is that these privatized embeddings are model-agnostic and the server can process them using any standard machine learning approach, from neural networks to decision trees, XGBoost, or generalized linear models. 

\subsection{Contributions}
\begin{enumerate}
    \item \textbf{Private Activation Sharing Framework} We introduce the first framework for sharing neural network activations with formal differential privacy guarantees. Unlike existing approaches that focus on privatizing model weights, our method enables clients to share private data representations while maintaining both privacy and utility. Our framework is model-agnostic, allowing servers to employ any machine learning approach (neural networks, random forests, XGBoost) for downstream tasks without modification.
    
    \item \textbf{Privacy Guarantees} We develop a novel regularized learning scheme that provides theoretical privacy guarantees for the shared activations. Our approach introduces a privacy-inducing loss function that guides the learning of private embeddings and provides a rigorous method to quantify the achieved privacy level post-optimization. We establish formal $(\epsilon,\delta)$-differential privacy guarantees through a novel theoretical analysis that includes uncertainty quantification of the privacy bounds.
    
    \item \textbf{Resource-Efficiency} We demonstrate significant improvements in computational and communication efficiency over existing private learning approaches. Our method requires only one round of client-server communication and reduces client-side computation through efficient embedding generation. Through extensive empirical evaluation, we show that our approach achieves better privacy-utility trade-offs compared to state-of-the-art methods while maintaining minimal computational overhead on the client side.
\end{enumerate}

\section{Related Work}\label{relatedwork}
 In this section, we categorize several related works and then compare them with power learning on several criteria as summarized in table \ref{tabComp}.


\begin{table}[!htbp]
\centering
\caption{This table contrasts the differences in the privacy problem catered to by the methods described in the related work in comparison with that of power learning. The unit of privacy in all is Training data}
\begin{center}\label{tabComp}
\[\begin{array}{|c|c|c|c|}
\hline \textbf { Method } & \begin{array}{c}
\textbf { Query Output } \\
\textbf { (Privatized) }
\end{array} & \begin{array}{c}
\textbf { What is not } \\
\textbf { private? }
\end{array} & \begin{array}{c}
\textbf { Post-Processing} \\
\textbf { Output }
\end{array} \\
\hline \begin{array}{c}\text {DP-SGD} \end{array} & \text { Model Weights } & \begin{array}{c}
\text { Training } \\
\text { activations }
\end{array} & \text { Predictions } \\
\hline \text { PATE } & \begin{array}{c}
\text { Test } \\
\text { Predictions }
\end{array} & \begin{array}{c}
\text { Teacher } \\
\text { Models }
\end{array} & \text { Student Model } \\
\hline \begin{array}{c}
\text { Power } \\
\text{Learning}
\end{array} & \begin{array}{c}
\text { Client Model } \\
\text { Embeddings }
\end{array} & \begin{array}{c}
\text { Client } \\
\text { Model }
\end{array} & \text { Server Model } \\
\hline

\end{array} \] \end{center}
\label{tab-rel}
\end{table}

\subsection{DP-SGD, PATE, Federate Learning and variants}The method of DP-SGD introduced in \cite{abadi2016deep} modifies stochastic gradient descent (SGD) based optimization used in learning neural networks by clipping the gradient for each lot of data and adding Gaussian noise to it. This approach is calibrated to ensure differential privacy of the learnt model with respect to the training data.  An improvement of this could be to perform DP-SGD with privacy amplification methods based on sampling or shuffling. Such amplification methods are not specific to DP-SGD and can be applied across the board to several kinds of differentially private mechanisms.
A recent alternative that improves over methods of DP-SGD with privacy amplification was that of DP-FTRL. This provides a better privacy-utility tradeoff while not necessitating any amplification. The basic idea is inspired by the binary-tree mechanism for differential privacy in computing private prefix sums. This helps with adding much lesser noise for release of gradient sums as the sum operation can be performed with a sensitivity proportional to only a log factor as the  query (optimization update in FTRL or follow-the-regularized-leader) can be reduced to private prefix sum computations. The Private Aggregation of Teacher Ensembles (PATE) framework was introduced to scale up training of differentially private models to large datasets and complex models. In this framework, multiple teacher models that are trained on disjoint subsets of sensitive data, guide a student model through knowledge transfer, while differential privacy is maintained via a noisy aggregation mechanism. However, some works \cite{tramèr2021differentiallyprivatelearningneeds} have shown that linear models trained on handcrafted features significantly outperform end-to-end deep neural networks for moderate privacy budgets . \\

We now compare these works in table \ref{tab-rel} with respect to criteria including the type of query that is privatized, the unit with respect to which the privacy is provided and the post-processing performed in order to attain the needed utility. PATE privatizes the predictions while power mechanism privatizes client model embeddings (which are not in label space but in real-valued embedding space). The client model embeddings are post-processed at the server to then obtain the predictions. The main benefit is a.) resource-efficiency gain for the client in this client-server setting, b.) private release in embedding space given the generative AI world we are moving into.

\section{Preliminaries}
We summarize the main notation used in this paper in Table~\ref{notTab} given below.

here are several formally established notions of privacy such as pure $\epsilon$-differential privacy and approximate $(\epsilon,\delta)$-differential privacy \cite{dwork2008differential,dwork2010differential,dwork2011differential,dwork2014algorithmic}.
 Other mathematical notions of privacy that have equivalences with differential privacy include Lipschitz privacy \cite{koufogiannis2015gradual,koufogiannis2015optimality,chatzikokolakis2013broadening,koufogiannis2017privacy,koufogiannis2016location} that is based on a Lipschitz requirement over the log density of the output of the query operating on sensitive data. Other equivalences include Blowfish privacy \cite{he2014blowfish,nie2010performance,machanavajjhala2015designing} and Pufferfish privacy \cite{kifer2014pufferfish,song2017pufferfish,kifer2012rigorous} which allows the user to
specify a class of protected predicates that must be learned subject to the guarantees of differential privacy, and all other predicates can be learned without differential privacy. However, differential privacy is
conservative and adversaries may not be able to leak as much
information as suggested by the theoretical bound \cite{nasr2021adversaryinstantiationlowerbounds}.The variants of zero-concentrated differential privacy (zCDP) \cite{dwork2016concentrated}, Renyi differential privacy (RDP) \cite{mironov2017renyi} and f-differential privacy (f-DP) \cite{dong2019gaussian} were introduced to avoid overly conservative budgeting of the obtained privacy level. Such a budgeting thereby helps improve the trade-off between the utility in answering queries properly and the achieved level of privacy. Each of the above notions of privacy has a formal mathematical definition of privacy as opposed to being a heuristic. There is a lengthy body of work of several privacy-preserving mechanisms that can help attain one or more of these notions of privacy, for various queries and at times with equivalences to pure and approximate differential privacy.

\begin{table}[!htbp]
\centering
\begin{tabular}{|l|l|}
\hline
{Population}                        & $\mathcal{X}$                                   \\ \hline
Input sample                        & $\mathbf{x} \in \mathbb{R}^d$                  \\ \hline
Input dataset                       & $\mathbf{X} \in \mathbb{R}^{n \times d}$       \\ \hline
Invertible and differentiable transformations & 
$\begin{aligned}
    &g_0: \mathbf{x} \mapsto \mathbf{w}_1, \\
    &g_i: \mathbf{w}_i \mapsto \mathbf{w}_{i+1} \text{ for } i \in \{1,\ldots, p-2\}, \\
    &g_{p-1}: \mathbf{w}_{p-1} \mapsto \mathbf{z}
\end{aligned}$ \\ \hline
Intermediate and final activations of privacy network & $\mathbf{w}_i \in \mathbb{R}^d, \mathbf{z} \in \mathbb{R}^d$ \\ \hline
Transformed dataset            & $\mathbf{Z} \in \mathbb{R}^{n \times d}$          \\ \hline
Composition                  & 
$\begin{aligned}
    &G: \mathcal{X} \to \mathcal{Z} \text{ where } \\
    &G(\mathbf{x}) = g_{p-1} \circ \cdots \circ g_0(\mathbf{x}) = \mathbf{z}
\end{aligned}$ \\ \hline
Intermediate transformed sample & $\mathbf{z}^i \in \mathbb{R}^d$ \\ \hline
Distribution of Population     & $f_{\mathcal{X}}(\mathbf{x})$     \\ \hline
Gradient of the distribution input sample & $\nabla_{\mathbf{x}} f_{\mathcal{X}}(\mathbf{x})$ \\ \hline
Jacobian of transform          & $\mathbf{J}_k = \frac{\partial \mathbf{w}_k}{\partial \mathbf{w}_{k-1}}$ \\ \hline
Privacy Network                & $\mathbf{P}_{\!N}$                \\ \hline
Utility Network                & $\mathbf{U}_{\!N}$                \\ \hline
True label of the input data   & $\mathbf{y}$                      \\ \hline
Label predicted by the model   & $\mathbf{\tilde{y}} = \mathbf{U}_{\!N}(\mathbf{z})$ \\ \hline
Privacy Loss                   & $\mathcal{L}_P(\mathbf{z},\mathbf{x})$ \\ \hline
Utility Loss                   & $\mathcal{L}_U(\mathbf{y},\mathbf{\tilde{y}})$ \\ \hline
\end{tabular}
\caption{List of main notations used in this paper.}
\label{notTab}
\end{table}

\subsection{Differential Privacy}\label{sec:dp}

\begin{definition}[$\epsilon$-Differential Privacy \cite{dwork2014algorithmic}]
A randomized algorithm $\mathcal{M}\colon \mathcal{X} \to \mathcal{Z}$ is $\epsilon$-differentially private if, for all neighboring datasets $\mathbf{X}, \mathbf{X'} \in \mathcal{X}$ and all $Z \subseteq \mathcal{Z}$,
\[
\Pr[\mathcal{M}(\mathbf{X}) \in Z] \leq e^\epsilon \Pr[\mathcal{M}(\mathbf{X'}) \in Z].
\]
\end{definition}

The notion of neighboring datasets differing in one record uses the Hamming metric. Other versions of differential privacy may use different neighborhood metrics, such as in metric differential privacy.

\begin{definition}[$(\epsilon,\delta)$-Differential Privacy]
A randomized algorithm $\mathcal{M}\colon \mathcal{X} \to \mathcal{Z}$ is $(\epsilon,\delta)$-differentially private if, for all neighboring datasets $\mathbf{X}, \mathbf{X'} \in \mathcal{X}$ and all $Z \subseteq \mathcal{Z}$,
\[
\Pr[\mathcal{M}(\mathbf{X}) \in Z] \leq e^\epsilon \Pr[\mathcal{M}(\mathbf{X'}) \in Z] + \delta.
\]
\end{definition}

\subsection{Lipschitz Privacy}
We use an equivalent notion of differential privacy called Lipschitz privacy \cite{koufogiannis2015gradual}, defined as a Lipschitz bound on the log density of the mechanism's output.

\begin{definition}[Lipschitz Privacy]
Consider a normed space $(\mathcal{X},\|\cdot\|)$, privacy level $\epsilon > 0$, and response set $\mathcal{Z}$. A mechanism $\mathcal{M}\colon \mathcal{X} \to \mathcal{Z}$ is $\epsilon$-Lipschitz private if for all $Z \subseteq \mathcal{Z}$,
\[
\left|\ln \Pr[\mathcal{M}(x) \in Z] - \ln \Pr[\mathcal{M}(x') \in Z]\right| \leq \epsilon \|x - x'\|, \quad \forall x, x' \in \mathcal{X}.
\]
\end{definition}

\begin{definition}[Local Lipschitz Privacy]
Consider a normed space $(\mathcal{U}, \|\cdot\|)$, privacy level map $\epsilon\colon \mathcal{U} \to \mathbb{R}_+$, and response set $\mathcal{Y}$. A mechanism $Q\colon \mathcal{U} \to \Delta(\mathcal{Y})$ is $\epsilon(\cdot)$-Lipschitz private if for any $\mathcal{S} \subseteq \mathcal{Y}$ and $u \in \mathcal{U}$,
\[
\| \nabla_u \ln \Pr[Q(u) \in \mathcal{S}] \| \leq \epsilon(u).
\]
\end{definition}

\subsection{Equivalent Forms}
For mechanisms with differentiable probability density functions, Lipschitz privacy translates to pointwise gradient bounds. Let $g(\cdot; x)$ denote the probability density of $\mathcal{M}(x)$. The condition becomes:
\[
\|\nabla_x \ln g(z; x)\|_* \leq \epsilon \quad \forall x \in \mathcal{X}, z \in \mathcal{Z},
\]
where $\|\cdot\|_*$ is the dual norm. For $\ell_2$ norms (self-dual), this simplifies to:
\[
\left\|\nabla_{x_i} \ln g(z; x)\right\|_2 \leq \epsilon \quad \forall i \in \{1,\ldots,n\}.
\]

Consider private data $x = [x_1, \ldots, x_n]$ where each $x_i \in \mathbb{R}^m$. With the adjacency relation:
\[
(x, x') \in \mathcal{A} \iff \|x_i - x'_i\|_2 \leq \lambda \quad \forall i,
\]
we get the following equivalence:

\begin{proposition}
For any $\lambda > 0$, an $\epsilon$-Lipschitz private mechanism $\mathcal{M}$ is $(\epsilon\lambda)$-differentially private under adjacency relation $\mathcal{A}$.
\end{proposition}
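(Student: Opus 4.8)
The plan is to convert the additive control that Lipschitz privacy places on the log-density directly into the multiplicative likelihood-ratio bound defining $(\epsilon\lambda)$-differential privacy. The starting point is the defining inequality of $\epsilon$-Lipschitz privacy, which bounds the difference of log-probabilities of any output event by $\epsilon$ times the input distance; the substantive work is to show that for any neighboring pair $(x,x')\in\mathcal{A}$ this input distance is at most $\lambda$, after which exponentiation finishes the argument.

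First I would fix an arbitrary measurable event $Z\subseteq\mathcal{Z}$ and a neighboring pair $(x,x')\in\mathcal{A}$, and invoke the log-density form of the hypothesis to obtain
\[
\left|\ln \Pr[\mathcal{M}(x)\in Z]-\ln \Pr[\mathcal{M}(x')\in Z]\right|\le \epsilon\,\|x-x'\|.
\]
The crux is the geometric step: I would argue that under $\mathcal{A}$ the relevant input norm satisfies $\|x-x'\|\le\lambda$. Since $\mathcal{A}$ bounds each record's displacement by $\|x_i-x_i'\|_2\le\lambda$ and the Lipschitz guarantee is stated with respect to the matching per-record $\ell_2$ geometry, this substitution is immediate; the care needed is to confirm that the norm appearing in the Lipschitz definition is exactly the one the adjacency relation controls, so that no extra record-counting factor is introduced.

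To connect with the differentiable formulation, I would alternatively integrate the pointwise gradient bound $\|\nabla_x\ln g(z;x)\|_*\le\epsilon$ along the straight-line path $x_t=x'+t(x-x')$, $t\in[0,1]$, giving
\[
\ln g(z;x)-\ln g(z;x')=\int_0^1 \langle \nabla_x\ln g(z;x_t),\, x-x'\rangle\, dt,
\]
and then apply the dual-norm (Cauchy--Schwarz) inequality together with $\|x-x'\|\le\lambda$ to recover the same $\epsilon\lambda$ bound. This route makes explicit the two assumptions that must hold: differentiability of the density along the segment, and that the segment stays inside $\mathcal{X}$ so the fundamental theorem of calculus applies.

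Finally, exponentiating the additive bound $\left|\ln \Pr[\mathcal{M}(x)\in Z]-\ln \Pr[\mathcal{M}(x')\in Z]\right|\le\epsilon\lambda$ yields $\Pr[\mathcal{M}(x)\in Z]\le e^{\epsilon\lambda}\,\Pr[\mathcal{M}(x')\in Z]$, which is exactly $(\epsilon\lambda)$-differential privacy; since $Z$ and the neighboring pair were arbitrary, the statement follows. I expect the main obstacle to lie in the norm-matching and integration step rather than in the exponentiation: one must align the Lipschitz norm with the adjacency metric so the constant is $\epsilon\lambda$ and not an inflated multiple, and one must justify the path integral when only the gradient form of the hypothesis is available.
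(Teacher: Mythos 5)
The paper offers no argument of its own for this proposition; its ``proof'' is a one-line deferral to \cite{koufogiannis2015gradual}. Your proposal therefore supplies what the paper omits, and it follows the standard route --- the same template the paper uses for Theorem~\ref{equiTheorem} in Appendix~\ref{equiProof}: a mean-value/path-integral step, Cauchy--Schwarz in the dual norm, then exponentiation. The skeleton (bound the additive gap of log-probabilities by $\epsilon\|x-x'\|$, bound $\|x-x'\|$ by $\lambda$ on neighboring pairs, exponentiate, note $Z$ and the pair were arbitrary) is correct.

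However, the step you yourself identify as the crux --- norm matching --- is exactly where your write-up is not airtight, because you declare the substitution ``immediate'' instead of resolving it. Under the adjacency relation as literally displayed in the paper, $(x,x')\in\mathcal{A}$ iff $\|x_i-x_i'\|_2\le\lambda$ for \emph{every} $i$, i.e., all $n$ records may move simultaneously. In that reading, Lipschitz privacy with respect to the full-dataset $\ell_2$ norm yields only $(\epsilon\sqrt{n}\lambda)$-DP, since $\|x-x'\|_2$ can be as large as $\sqrt{n}\lambda$; and the per-record gradient form $\|\nabla_{x_i}\ln g(z;x)\|_2\le\epsilon$ combined with your path integral yields $(n\epsilon\lambda)$-DP, because the inner product decomposes into $n$ blocks each contributing up to $\epsilon\lambda$. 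The clean constant $\epsilon\lambda$ is recovered only if either (a) adjacency is taken to mean the datasets differ in a \emph{single} record, displaced by at most $\lambda$ --- consistent with the paper's earlier statement that neighboring datasets differ in one record under the Hamming metric --- so that $\|x-x'\|_2=\|x_j-x_j'\|_2\le\lambda$ for the unique differing record $j$; or (b) the Lipschitz norm is $\max_i\|x_i-x_i'\|_2$, whose dual-norm gradient condition is the strictly stronger requirement $\sum_i\|\nabla_{x_i}\ln g(z;x)\|_2\le\epsilon$. Your proof should commit explicitly to reading (a): with single-record adjacency the path integral runs over one block only, and the remainder of your argument goes through verbatim with the stated constant.
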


\begin{proof}
See \cite{koufogiannis2015gradual} for proof details.
\end{proof}

Common differentially private mechanisms (e.g., Laplace, exponential) satisfy Lipschitz privacy. In our work, we use local Lipschitz privacy to prove $(\epsilon,\delta)$-differential privacy for our mechanism.

\begin{tcolorbox}[colback=cyan!10, colframe=black]
\begin{theorem}[Equivalence of privacy: Gradient \(\ell_2\) bound implies Lipschitz privacy] \label{equiTheorem}
Let \( g : \mathbb{R}^d \times \mathcal{Z} \to \mathbb{R}_{>0} \) be a conditional probability density function. Suppose that for each \( z \in \mathcal{Z} \), the function \( x \mapsto \ln g(x, z) \) is differentiable and satisfies
\[
\| \nabla_x \ln g(x, z) \|_2 \leq \varepsilon \quad \text{for all } x \in \mathbb{R}^d \text{ and all } z \in \mathcal{Z}.
\]
Then the mechanism \( x \mapsto g(x, \cdot) \) satisfies Lipschitz privacy with respect to the Euclidean norm, that is,
\[
|\ln g(x', z) - \ln g(x, z)| \leq \varepsilon \|x' - x\|_2 \quad \text{for all } x, x' \in \mathbb{R}^d \text{ and all } z \in \mathcal{Z}.
\]
\begin{proof}
   See Appendix \ref{equiProof}.
\end{proof}
\end{theorem}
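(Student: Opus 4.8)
The plan is to fix an arbitrary response $z \in \mathcal{Z}$ and reduce the multivariate Lipschitz estimate to a one-dimensional calculus statement along the straight line joining $x$ and $x'$. Writing $h(\cdot) = \ln g(\cdot, z)$, the hypothesis supplies the uniform bound $\|\nabla_x h(x)\|_2 \le \varepsilon$, and the target inequality $|h(x') - h(x)| \le \varepsilon \|x' - x\|_2$ is exactly the assertion that a differentiable function whose gradient is bounded by $\varepsilon$ in the Euclidean norm is $\varepsilon$-Lipschitz in the same norm. I would therefore prove this standard fact directly rather than cite it, since the whole content is a chain-rule-plus-Cauchy--Schwarz computation.

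First I would introduce the segment $\gamma(t) = x + t(x' - x)$ for $t \in [0,1]$ and set $\phi(t) = h(\gamma(t))$. Because $h$ is differentiable on all of $\mathbb{R}^d$ and $\gamma$ is affine, $\phi$ is continuous on $[0,1]$ and differentiable on $(0,1)$, and the chain rule gives $\phi'(t) = \langle \nabla_x h(\gamma(t)), x' - x \rangle$. Next I would apply the one-variable mean value theorem to $\phi$: there is some $t^\ast \in (0,1)$ with $\phi(1) - \phi(0) = \phi'(t^\ast)$, that is $h(x') - h(x) = \langle \nabla_x h(\gamma(t^\ast)), x' - x \rangle$. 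Then, taking absolute values and applying the Cauchy--Schwarz inequality yields $|h(x') - h(x)| \le \|\nabla_x h(\gamma(t^\ast))\|_2 \,\|x' - x\|_2$, and inserting the gradient bound $\|\nabla_x h(\gamma(t^\ast))\|_2 \le \varepsilon$ closes the estimate. Since $z$ was arbitrary, the bound holds for all $z \in \mathcal{Z}$, which is precisely the stated Lipschitz privacy.

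I expect the computation to be routine, so the only step warranting care is the regularity of $h$. The hypothesis assumes $x \mapsto \ln g(x, z)$ is merely differentiable, not continuously differentiable, so I would deliberately avoid the integral form $h(x') - h(x) = \int_0^1 \phi'(t)\,dt$, whose justification needs integrability of $\phi'$, and instead route the argument through the mean value theorem, which requires only continuity of $\phi$ on $[0,1]$ and differentiability on $(0,1)$. A secondary, purely cosmetic remark is that the general Lipschitz-privacy formulation is phrased with the dual norm; here the relevant norm is $\ell_2$, which is self-dual, so Cauchy--Schwarz furnishes exactly the pairing bound needed and no separate identification of the dual norm is necessary.
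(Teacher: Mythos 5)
Your proposal is correct and follows essentially the same route as the paper's own proof: parametrize the segment $\gamma(t)=x+t(x'-x)$, apply the one-variable mean value theorem to $\phi(t)=\ln g(\gamma(t),z)$, use the chain rule and Cauchy--Schwarz, and finish with the uniform gradient bound. Your added care about using the mean value theorem rather than the fundamental theorem of calculus (since only differentiability, not $C^1$, is assumed) is a valid refinement of exactly the argument the paper gives.
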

\end{tcolorbox}

\section{Power Learning: Setup}
\label{sec:power_learn}Before formalizing our method, we detail the problem setup to provide the needed context. Consider a client with sensitive data $x$ who wishes to collaborate with a computationally powerful server for machine learning tasks, while maintaining privacy of their data. Our key insight is to transform this private data into embeddings $z$ through a carefully designed two-step process. First, a privatization network transforms $x$ into embeddings $z$ with quantifiable privacy guarantees derived from Lipschitz privacy. These embeddings are then evaluated by a utility network on the client side to ensure they retain task-relevant information. \par By jointly minimizing a privacy loss (which bounds the Lipschitz privacy of $z$) and a utility loss (which measures the task performance), we ensure the embeddings are both private and useful. The advantage of using Lipschitz privacy, is that we can account for the parameter $\epsilon$, post-hoc. This is because we can calculate the jacobians of the transformations which resulted in the embedding. The privacy loss, thus can be jointly optimized with utility, since the transformations to privatize the sample, can now be learnt using back-propogation. 

This approach creates a natural optimization framework: the embedding $z$ must balance between minimizing privacy loss to ensure stronger privacy guarantees, while preserving enough information to enable good performance on the utility network. Once these private embeddings are generated, they can be shared with the server in a single round of communication. The server, unconstrained by privacy requirements, can then employ any standard machine learning approach to process these embeddings for the desired task.

\subsection{Systems Interactions} 
\label{sysInt}
Power Learning operates through a carefully designed interaction between two entities: a client with sensitive data and a computationally powerful server. Figure \ref{mainFig} illustrates this interaction at a high level. The client employs a lightweight privatization network that transforms sensitive data into private embeddings. These embeddings come with formal privacy guarantees, established through a rigorous privacy calibration process. \par Specifically, we derive theoretical worst-case privacy bounds ($\epsilon$) from empirical measurements using high-probability confidence bounds. This calibration process transitions our estimated $\epsilon$   to $(\epsilon,\delta)$-differential privacy, providing a more practical privacy framework. Note that our method focuses on protecting the input data; we do not consider labels to be private in this work. The interaction process consists of two main components:
\subsection{Client-side Processing}
The client ensures privacy guarantees through two sequential stages.\\
 \textbf{Privacy-Inducing Training:} The client employs two networks: a privatization network that transforms input data $x$ into embeddings $z$, and a lightweight utility network that processes these embeddings for the learning task. The privatization network is trained to minimize two objectives: (1) a privacy loss that bounds the Lipschitz privacy of the generated embeddings $z$, derived from the transformation's gradient properties, and (2) a utility loss that measures how well these embeddings perform on the client's utility network. This joint optimization ensures that the embeddings $z$ maintain sufficient privacy (controlled by the privacy loss) while preserving enough information for the learning task (verified by the utility network). Both networks on the client side are intentionally lightweight, requiring significantly less computation than the server's model.
   \\ \textbf{Privacy Level Calibration:} Due to the non-convex nature of the joint loss function and its sample-size dependency, empirical loss minimization alone cannot guarantee differential privacy. We develop a theoretical framework to convert empirical privacy measurements into formal $(\epsilon,\delta)$-differential privacy guarantees (detailed in Appendices A and B). This calibration allows the client to verify the privacy level of each sample before transmission, ensuring only sufficiently private embeddings are shared with the server in a single communication round.

\begin{algorithm}
\caption{PowerLearn: Privacy-Preserving Collaborative Learning}
\begin{algorithmic}[1]
\renewcommand{\algorithmicindent}{1em}

\STATE {\color{BrickRed}\textbf{def}} {\color{MidnightBlue}\textbf{PowerLearn}}($X$, $Y$, $\epsilon$, $S$)
\STATE {\color{OliveGreen}\# Preprocess training data}
\STATE $X_{\text{train}}, Y_{\text{train}}, X_{\text{val}}, Y_{\text{val}} \leftarrow$ {\color{Plum}\textbf{PreprocessData}}($X$, $Y$)
\STATE {\color{OliveGreen}\# Initialize privacy model}
\STATE $\text{client\_model} \leftarrow$ {\color{Plum}\textbf{PrivacyNet}}($\text{depth}$)
\STATE $\text{trainer} \leftarrow$ {\color{Plum}\textbf{PrivacyTrainer}}($\text{client\_model}$, $\text{opt}$)
\STATE $\text{trainer}.${\color{Plum}\textbf{train}}()
\STATE {\color{OliveGreen}\# Generate embeddings}
\STATE $X_{\text{emb}}, X_{\text{val\_emb}} \leftarrow$ {\color{Plum}\textbf{GenerateEmbeddings}}($X_{\text{train}}, X_{\text{val}}$)
\STATE {\color{OliveGreen}\# Select corresponding labels}
\STATE $Y_{\text{emb}}, Y_{\text{emb\_val}} \leftarrow$ {\color{Plum}\textbf{SelectLabels}}($Y_{\text{train}}, Y_{\text{val}}$)
\STATE {\color{OliveGreen}\# Calibrate model}
\STATE {\color{Plum}\textbf{calibration}}($\text{client\_model}$, $X_{\text{train}}, X_{\text{val}}, \epsilon, S, Y$)
\STATE {\color{BrickRed}\textbf{return}} $X_{\text{emb}}, Y_{\text{emb}}$
\STATE
\STATE {\color{OliveGreen}\# Client Side}
\STATE $X_{\text{emb}}, Y_{\text{emb}} \leftarrow$ {\color{MidnightBlue}\textbf{PowerLearn}}($X$, $Y$, $\epsilon$, $S$)
\STATE {\color{OliveGreen}\# Send to server}
\STATE
\STATE {\color{OliveGreen}\# Server Side}
\STATE $\text{server\_model} \leftarrow$ {\color{Plum}\textbf{NeuralNet/XGBoost/RandForr}}()
\STATE $\text{trainer} \leftarrow$ {\color{Plum}\textbf{Trainer}}($\text{server\_model}$, $X_{\text{emb}}$, $Y_{\text{priv}}$, $\text{opt}$)
\STATE $\text{trainer}.${\color{Plum}\textbf{train}}()
\end{algorithmic}
\end{algorithm}
\subsubsection{Server-side Processing}
The server receives these private embeddings and enjoys complete flexibility in its choice of machine learning methods. As shown in Figure \ref{magtwo}, the server can employ any standard approach - neural networks, random forests, or XGBoost - making our framework model-agnostic. We demonstrate this flexibility through extensive empirical evaluation in Section [X], comparing performance across different server-side models.

\begin{figure}[!htbp]
    \centering
\includegraphics[width=0.8\linewidth]{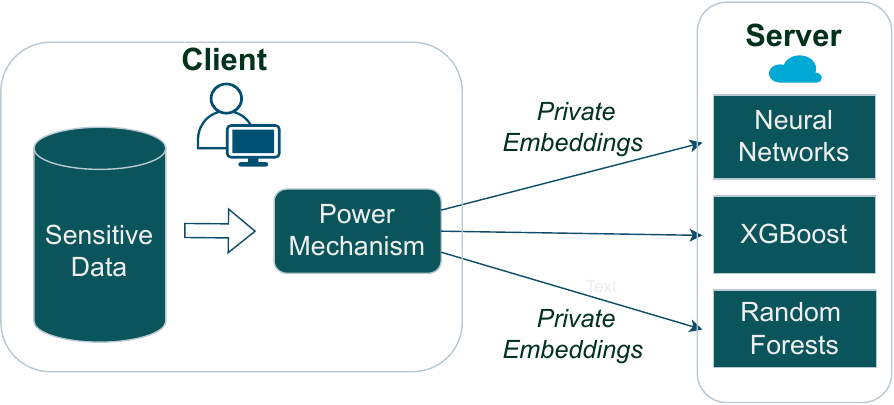}
    \caption{The interactions allow the server to use several machine learning methods, making the system private and fairly model agnostic.}
    \label{magtwo}
\end{figure}

\subsection{Power Mechanism: Generation of Private Embeddings} \label{sec:pow_mech_thm}One of the foundations of neural-networks has been to use several variants of compositions of functions to define them. Inspired by that, we provide a condition to be be enforced on compositions of functions over the raw data, to obtain $\epsilon$-Lipschitz privacy over the outputs with regards to the raw data. We then later on use this result to provide a method for releasing embeddings of tabular data with privacy.
This main result, proposed below, provides a sufficient condition on a composition of the form $\mathbf{z} = G(\mathbf{x})= g_{p-1}\circ g_{p-2}\ldots \circ  g_{0}(\mathbf{x})$, that ensures $\epsilon$-Lipschitz privacy on their outputs $\mathbf{z}$. 
\begin{tcolorbox}[colback=cyan!10, colframe=black]
\begin{theorem}(Power Learning Theorem)
Let $\mathbf{X} \in \mathbb{R}^{n \times d}$ be a data set where each sample $\mathbf{x} \in \mathbb{R}^d$ has a probability density function $f_{\mathcal{X}}(\mathbf{x})$.
Suppose $G=g_{p-1} \circ \cdots \circ g_0$ is a composition of $C^1$-diffeomorphisms $g_k: \mathbb{R}^d \rightarrow \mathbb{R}^d$ with Jacobians $\mathbf{J}_k\left(\mathbf{w}_{k-1}\right)=\frac{\partial g_k}{\partial \mathbf{w}_{k-1}}$ for $\mathbf{w}_{k-1}=g_{k-1} \circ \cdots \circ g_0(\mathbf{x})$. If the transformations satisfy

$$
\left\|\nabla_{\mathbf{x}} \log f_{\mathcal{X}}(\mathbf{x})-\sum_{k=0}^{p-1} \nabla_{\mathbf{x}} \log \left|\operatorname{det} \mathbf{J}_k\right|\right\| \leq \epsilon
$$then the output $\mathbf{z}=G(\mathbf{x})$ achieves $\epsilon$-Lipschitz privacy, defined as $
\left\|\nabla_{\mathbf{x}} \log h_{\mathcal{Z}}(\mathbf{z})\right\| \leq \epsilon$ where $h_{\mathcal{Z}}$ is the density of $\mathbf{z}$.
\end{theorem}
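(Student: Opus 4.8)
The plan is to reduce the claim to the change-of-variables formula for probability densities under a diffeomorphism, after which the bound follows by matching terms with the hypothesis. First I would observe that a finite composition of $C^1$-diffeomorphisms is itself a $C^1$-diffeomorphism, so $G$ is invertible with nonvanishing Jacobian determinant everywhere; this is what licenses the density transformation. Writing $\mathbf{z} = G(\mathbf{x})$, the pushforward density of $\mathbf{z}$ is
\[
h_{\mathcal{Z}}(\mathbf{z}) = f_{\mathcal{X}}(\mathbf{x})\,\bigl|\det \mathbf{J}_G(\mathbf{x})\bigr|^{-1}, \qquad \mathbf{x}=G^{-1}(\mathbf{z}),
\]
where $\mathbf{J}_G$ denotes the Jacobian of the full composition.

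Next I would factor the Jacobian determinant. By the chain rule $\mathbf{J}_G(\mathbf{x}) = \mathbf{J}_{p-1}(\mathbf{w}_{p-2})\cdots \mathbf{J}_0(\mathbf{x})$, and multiplicativity of the determinant gives $\bigl|\det \mathbf{J}_G(\mathbf{x})\bigr| = \prod_{k=0}^{p-1} \bigl|\det \mathbf{J}_k(\mathbf{w}_{k-1})\bigr|$. Taking logarithms turns the product into the sum that appears in the hypothesis:
\[
\log h_{\mathcal{Z}}(\mathbf{z}) = \log f_{\mathcal{X}}(\mathbf{x}) - \sum_{k=0}^{p-1} \log\bigl|\det \mathbf{J}_k(\mathbf{w}_{k-1})\bigr|.
\]
Finally I would differentiate the right-hand side with respect to $\mathbf{x}$, noting that each intermediate activation $\mathbf{w}_{k-1} = g_{k-1}\circ\cdots\circ g_0(\mathbf{x})$ is itself a function of $\mathbf{x}$, so that
\[
\nabla_{\mathbf{x}} \log h_{\mathcal{Z}}(\mathbf{z}) = \nabla_{\mathbf{x}} \log f_{\mathcal{X}}(\mathbf{x}) - \sum_{k=0}^{p-1} \nabla_{\mathbf{x}} \log\bigl|\det \mathbf{J}_k\bigr|.
\]
Taking norms and invoking the hypothesis bounds this quantity by $\epsilon$, which is exactly the asserted $\epsilon$-Lipschitz privacy condition; the equivalence of this gradient bound with the integral form of Lipschitz privacy is already supplied by Theorem~\ref{equiTheorem}.

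The step I expect to require the most care is not any single computation but the bookkeeping around what the gradient is taken with respect to: the privacy functional is the log of the pushforward density $h_{\mathcal{Z}}$, a function on $\mathcal{Z}$, yet the bound is imposed on its gradient in the input coordinate $\mathbf{x}$ after pulling back through $G$. I would make explicit that $\nabla_{\mathbf{x}}\log h_{\mathcal{Z}}(\mathbf{z})$ denotes the gradient of the composite map $\mathbf{x}\mapsto \log h_{\mathcal{Z}}(G(\mathbf{x}))$, so that the chain-rule contributions through the intermediate activations $\mathbf{w}_{k-1}$ are absorbed into the terms $\nabla_{\mathbf{x}}\log\bigl|\det\mathbf{J}_k\bigr|$ and match the hypothesis term by term.

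A secondary point worth stating is the differentiability and strict positivity of $f_{\mathcal{X}}$ and of each $\det \mathbf{J}_k$, which guarantee that all logarithms and gradients are well defined; these follow from the $C^1$-diffeomorphism assumption together with the standing assumption that the population density is positive. Once these regularity points are recorded, the argument is essentially a one-line consequence of the change-of-variables identity, with no analytic estimate beyond the hypothesis itself.
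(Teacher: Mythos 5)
Your proposal is correct and follows essentially the same route as the paper's proof: both apply the change-of-variables formula to obtain $\log h_{\mathcal{Z}}(\mathbf{z}) = \log f_{\mathcal{X}}(\mathbf{x}) - \sum_{k=0}^{p-1}\log\left|\det \mathbf{J}_k\right|$, then differentiate with respect to $\mathbf{x}$ and take norms to invoke the hypothesis. The only cosmetic difference is that the paper peels the composition recursively, one layer at a time, whereas you split $\left|\det \mathbf{J}_G\right|$ in a single step via the chain rule and multiplicativity of the determinant; your explicit remarks on regularity and on reading $\nabla_{\mathbf{x}}\log h_{\mathcal{Z}}(\mathbf{z})$ as the gradient of the pullback $\mathbf{x}\mapsto \log h_{\mathcal{Z}}(G(\mathbf{x}))$ are clarifications the paper leaves implicit.
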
 
\begin{proof}
    See Appendix \ref{powProof}.
\end{proof}
\end{tcolorbox}


\subsection{Sketch of proof strategy}
  We first provide a brief sketch of the proof strategy here before listing down the formal proof. The strategy is to first use kernel density estimates to model the input data distribution along with confidence bounds around it. Then the classical change of variable theorem for probability distributions is used to model the distribution of the output of the learned transformation, based on any given set of weights. Then these input and output probability distributions are used to put in a constraint on them to achieve $\epsilon$-Lipschitz privacy, by finding a loss function of the weights that needs to be minimized. This gives the result stated in the theorem. 

\subsection{Restricted functional form of embeddings} Although the above theorem is generic for different kinds of $g$' that are one-to-one and continuous, we now restrict ourselves to a specific functional form of $g$, which we use to apply power learning to neural networks
in the setting of collaborative learning as described in Section \ref{sysInt}. To cater to this setting, we use a multilayer perceptron to learn a matrix $\mathbf{H}$
where $\mathbf{H} = \mathbf{P_N(x)}$ for some input $\mathbf{x} \in \mathbb{R}^d$ where $\mathbf{P_N}$ is the neural network. Following our nomenclature from the above shared notation in Table \ref{notTab} we have,
$$g_k(\mathbf{w_k}) = \mathbf{H_kw_k} =\mathbf{P_N(w_k)w_k} $$
$$ \therefore \mathbf{z} = G(\mathbf{x}) = g_{p-1}\circ g_{p-2}\ldots \circ  g_{0}(\mathbf{x}) $$
Now we use $\mathbf{z}$ as an embedding and feed it as input to the smaller client utility network $\mathbf{U_N}$  to generate the predicted label $\mathbf{\tilde{y}}  =  \mathbf{U_N(z)} $. We want the embedding $\mathbf{z}$ to be generated in such a way that it has a formal guarantee of privacy, so that $\mathbf{P_N}$ rightly becomes a privatization network as described in Figure \ref{mainFig}.
\subsubsection{Privacy Inducing Loss funtion: Pre-Calibration}
We now need to jointly train the privacy network which generates the matrix and the utility network on the client. The joint loss can be divided into two parts.

$$\mathcal{L}_P(\mathbf{z},\mathbf{x}) = \left\lVert    \frac{\partial h_{\mathcal{Z}}(\mathbf{z})}{\partial \mathbf{x}}  \right \rVert =  \left\lVert   \frac{f'_{\mathcal{X}}(\mathbf{x})}{f_{\mathcal{X}}(\mathbf{x})} - \sum_{k=0}^{p-1} \frac{\partial}{\partial \mathbf{x}} \log(|\det(\mathbf{J}_k)) \right \rVert $$
The utility loss function $\mathcal{L}_U(y,\tilde{y})$ depends on the task. Combining the two losses gives us our joint loss function.
\begin{equation}
\mathcal{L}(\mathbf{y},\mathbf{x})  =  \left\lVert   \frac{f'_{\mathcal{X}}(\mathbf{x})}{f_{\mathcal{X}}(\mathbf{x})} - \sum_{k=0}^{p-1} \frac{\partial}{\partial \mathbf{x}} \log(|\det(\mathbf{J}_k)) \right \rVert + \lambda  \mathcal{L}_U(y,\tilde{y})    
\end{equation}

Upon minimization, the conversion from the empirically measured privacy level to an exact theoretically guaranteed privacy level of $\epsilon$ is performed as detailed in Appendices 1.) and 2.) of Appendix A.

\subsection{Calibration of attained $\epsilon$ level of privacy}

We use kernel density estimation to estimate the probability density of each sample as given by
$ \hat{f}_{\mathcal{X}}(x) = \frac{1}{nh^d} \sum^{n}_{i=1} K\left(\frac{x- X_i}{h}\right)$. The Gaussian kernel here is given by
$K(u) = \frac{e^{- \lvert \lvert u \rvert \rvert^2}}{(2 \pi)^{d/2} }$. This helps us to account for the term $\frac{f'_{\mathcal{X}}(\mathbf{x})}{f_{\mathcal{X}}(\mathbf{x})}$ in the loss of privacy. However, we need to find confidence intervals for these probability density estimates to understand the worst case $\epsilon$. The range in which the true probability density lies with $1-\alpha$ probability is given by
$$ CI_{1-\alpha} = [\hat{f}_{\mathcal{X}}(\mathbf{x}) - z_{1-\alpha/2} \sqrt{\frac{\mu_{K} \hat{f}_{\mathcal{X}}(\mathbf{x})}{nh^d}},\hat{f}_{\mathcal{X}}(\mathbf{x})+z_{1-\alpha/2} \sqrt{\frac{\mu_{K} \hat{f}_{\mathcal{X}}(\mathbf{x})}{nh^d}}].$$
The term $\mu_K$ is given by
$\mu_K =  \int K^{2}(x) dx$.
For the Gaussian kernel, this is evaluated as $\mu_K =  1/(2^{d} \pi^{d/2})$. The condition for $\epsilon $ Lipschitz privacy is given by
$  \left\lVert \frac{\partial}{\partial \mathbf{x}}\log h_{\mathcal{Z}}(\mathbf{z})\right\rVert \leq \epsilon$. Hence,
to obtain Lipschitz privacy on estimated probability with confidence $ 1- \alpha$ we have the condition to be,

$$
\begin{alignedat}{2}
&\left\lVert
\frac{\partial f_{\mathcal{X}}(\mathbf{x})}{f_{\mathcal{X}}(\mathbf{x}) \partial \mathbf{x}} -  \frac{\partial}{\partial \mathbf{x}} \sum_{k=0}^{p-1}  \log(|\det(\mathbf{J}k))\right \rVert = 
&\left\lVert
\frac{\partial \hat{f}{\mathcal{X}}(\mathbf{x})}{f_{\mathcal{X}}(\mathbf{x}) \partial \mathbf{x}} + \frac{\partial f_{\mathcal{X}}(\mathbf{x}) - \partial \hat{f}{\mathcal{X}}(\mathbf{x})}{f{\mathcal{X}}(\mathbf{x}) \partial \mathbf{x}}-  \frac{\partial}{\partial \mathbf{x}} \sum_{k=0}^{p-1}  \log(|\det(\mathbf{J}_k))\right \rVert \leq \epsilon
\end{alignedat}.
$$

This simplifies as follows based on the Cauchy-Schwartz inequality,


$$
\begin{alignedat}{2}
&\left\lVert \frac{\partial}{\partial \mathbf{x}}\log h_{\mathcal{Z}}(\mathbf{z}) \right \rVert  = \left\lVert
\frac{\partial \hat{f}{\mathcal{X}}(\mathbf{x})}{f{\mathcal{X}}(\mathbf{x}) \partial \mathbf{x}} -  \frac{\partial}{\partial \mathbf{x}} \sum_{k=0}^{p-1}  \log(|\det(\mathbf{J}k))\right \rVert + 
&\left\lVert \frac{\partial f{\mathcal{X}}(\mathbf{x}) - \partial \hat{f}{\mathcal{X}}(\mathbf{x})}{f{\mathcal{X}}(\mathbf{x}) \partial \mathbf{x}} \right \rVert \leq \epsilon
\end{alignedat}.
$$
Now upon using the above stated confidence interval bounds on $f(X)$, we can estimate the effectively obtained privacy level as $\epsilon^{\prime} +  \left\lVert \frac{\partial f_{\mathcal{X}}(\mathbf{x}) - \partial \hat{f}_{\mathcal{X}}(\mathbf{x})}{f_{\mathcal{X}}(\mathbf{x}) \partial \mathbf{x}} \right \rVert $ with $\epsilon^{\prime}$ in the form of 
$\epsilon^{\prime}  = \max\Bigg(lower, upper \Bigg) $ where, $$lower=\left\lVert 
 \frac{\partial \hat{f}_{\mathcal{X}}(\mathbf{x})}{\Big(\hat{f}_{\mathcal{X}}(\mathbf{x}) -z_{1-\alpha/2} \sqrt{\frac{\mu_{K} \hat{f}_{\mathcal{X}}(\mathbf{x})}{nh^d}}\Big) \partial \mathbf{x}} -  \frac{\partial}{\partial \mathbf{x}} \sum_{k=0}^{p-1}  \log(|\det(\mathbf{J}_k))\right \rVert $$
and, $$upper = \left\lVert 
 \frac{\partial \hat{f}_{\mathcal{X}}(\mathbf{x})}{\Big(\hat{f}_{\mathcal{X}}(\mathbf{x}) + z_{1-\alpha/2} \sqrt{\frac{\mu_{K} \hat{f}_{\mathcal{X}}(\mathbf{x})}{nh^d}}\Big) \partial \mathbf{x}} -  \frac{\partial}{\partial \mathbf{x}} \sum_{k=0}^{p-1}  \log(|\det(\mathbf{J}_k))\right \rVert.$$

Now for $K = \mu_K/nh^d $ since $\left\lVert \frac{\partial f_{\mathcal{X}}(\mathbf{x}) - \partial \hat{f}_{\mathcal{X}}(\mathbf{x})}{f_{\mathcal{X}}(\mathbf{x}) \partial \mathbf{x}} \right \rVert \leq d \left\vert  \sqrt{\frac{K}{4\hat{f}_{\mathcal{X}}(x)}}  \right \rVert \mathcal \mathcal{z}_{1 - \alpha /2}  \text{ with probability $1- \alpha$}$, we have the final effective privacy level $\epsilon$ to be given by the following upper bound,
$$ \epsilon \leq  \epsilon' + d \left\vert  \sqrt{\frac{K}{4\hat{f}_{\mathcal{X}}(x)}}  \right \rVert \mathcal \mathcal{z}_{1 - \alpha /2}  \text{ with probability $1- \alpha$}.  $$


\section*{Reconstruction prevention under Lipschitz privacy}

\begin{tcolorbox}[colback=cyan!10, colframe=black]
\begin{lemma}
Let $A(z) \in \mathbb{R}^d$ be a vector-valued random variable and let $\mu(x) = \mathbb{E}_{z \sim p_Z(\cdot \mid x)}[A(z)]$ denote its conditional mean given $x$. Then,
\[
\mathbb{E}_{z \sim p_Z(\cdot \mid x)}\left[\|A(z) - \mu(x)\|^2\right] = \operatorname{Tr}(\operatorname{Cov}(A(z) \mid x))
\]
\end{lemma}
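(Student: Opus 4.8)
The plan is to prove this identity by a direct computation that rewrites the squared Euclidean norm in a form where the definition of the conditional covariance matrix can be applied. I would start from the elementary fact that for any vector $v \in \mathbb{R}^d$ one has $\|v\|^2 = \operatorname{Tr}(v v^\top)$, and apply it pointwise with $v = A(z) - \mu(x)$. This rewrites the left-hand side as
\[
\mathbb{E}_{z \sim p_Z(\cdot \mid x)}\left[\operatorname{Tr}\big((A(z) - \mu(x))(A(z) - \mu(x))^\top\big)\right].
\]

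The next step is to interchange the expectation with the trace. Since the trace is a finite linear combination of matrix entries and expectation is linear, this exchange is immediate (assuming the second moments are finite, which is implicit in the covariance being well-defined). This produces
\[
\operatorname{Tr}\Big(\mathbb{E}_{z \sim p_Z(\cdot \mid x)}\big[(A(z) - \mu(x))(A(z) - \mu(x))^\top\big]\Big).
\]
I would then recognize that the matrix inside the trace is exactly the conditional covariance $\operatorname{Cov}(A(z) \mid x)$, by its definition together with the fact that $\mu(x)$ is the conditional mean; substituting yields $\operatorname{Tr}(\operatorname{Cov}(A(z) \mid x))$.

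As an alternative that avoids matrix notation, I might instead expand componentwise, writing $\|A(z) - \mu(x)\|^2 = \sum_{i=1}^d (A_i(z) - \mu_i(x))^2$, pushing the expectation through the finite sum, and identifying each term $\mathbb{E}[(A_i(z) - \mu_i(x))^2]$ as the conditional variance $\operatorname{Var}(A_i(z) \mid x)$, i.e.\ the $i$-th diagonal entry of $\operatorname{Cov}(A(z) \mid x)$; summing the diagonal entries reproduces the trace.

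There is no substantive obstacle here: the statement is a standard moment identity and the entire content is bookkeeping. The only point requiring a word of care is the justification for exchanging expectation with the trace (equivalently, with the finite componentwise sum), which rests on linearity and on the implicit finiteness of second moments needed for $\operatorname{Cov}(A(z)\mid x)$ to exist; I would state that integrability assumption explicitly.
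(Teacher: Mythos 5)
Your proof is correct and essentially the same as the paper's: your componentwise alternative (expanding $\|A(z)-\mu(x)\|^2 = \sum_i (A_i(z)-\mu_i(x))^2$ and identifying each term as a conditional variance, i.e.\ a diagonal entry of the covariance) is precisely the argument the paper gives. Your primary route via $\|v\|^2 = \operatorname{Tr}(v v^\top)$ and linearity of trace is just the same computation in matrix notation, so there is nothing substantive to distinguish the two.
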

\begin{proof}
    See Appendix xyz
\end{proof}
\end{tcolorbox}

\begin{tcolorbox}[colback=cyan!10, colframe=black]
\begin{lemma}
Let $f_X(x)$ be a continuously differentiable probability density function on $\mathbb{R}^d$ that decays sufficiently rapidly at infinity, such that $f_X(x) \to 0$ and $\nabla_x f_X(x) \to 0$ as $\|x\| \to \infty$. Then,
\[
\mathbb{E}_{x \sim f_X}[\nabla_x \log f_X(x)] = 0.
\]
\end{lemma}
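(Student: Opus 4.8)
The plan is to reduce the claimed identity to the statement that the integral of the gradient of a rapidly decaying density vanishes, and then to establish the latter by the fundamental theorem of calculus applied one coordinate at a time. First I would use the elementary identity $\nabla_x \log f_X(x) = \nabla_x f_X(x) / f_X(x)$, valid wherever $f_X(x) > 0$. Since the expectation is taken with respect to the measure $f_X(x)\,dx$, which assigns zero mass to the set $\{f_X = 0\}$, I can write
\[
\mathbb{E}_{x \sim f_X}[\nabla_x \log f_X(x)] = \int_{\{f_X > 0\}} \frac{\nabla_x f_X(x)}{f_X(x)}\, f_X(x)\, dx = \int_{\{f_X > 0\}} \nabla_x f_X(x)\, dx.
\]
Because $f_X \ge 0$ is continuously differentiable, at every interior point of $\{f_X = 0\}$ the value is a global minimum and hence $\nabla_x f_X = 0$ there, while the topological boundary of $\{f_X = 0\}$ is Lebesgue-null; this lets me extend the domain of integration to all of $\mathbb{R}^d$, so the target reduces to showing $\int_{\mathbb{R}^d} \nabla_x f_X(x)\, dx = \mathbf{0}$.

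Next I would prove this vector identity componentwise. Fixing a coordinate $i$ and invoking Fubini's theorem (justified by the integrability of $\nabla_x f_X$ guaranteed by the rapid-decay hypothesis), I would integrate the $i$-th partial derivative first in $x_i$ with the remaining coordinates $x_{-i}$ held fixed:
\[
\int_{\mathbb{R}^d} \frac{\partial f_X}{\partial x_i}(x)\, dx = \int_{\mathbb{R}^{d-1}} \left( \int_{-\infty}^{\infty} \frac{\partial f_X}{\partial x_i}(x)\, dx_i \right) dx_{-i}.
\]
By the fundamental theorem of calculus the inner integral equals $\lim_{a \to \infty} f_X(x_{-i}, a) - \lim_{a \to -\infty} f_X(x_{-i}, a)$, and both limits are zero by the hypothesis $f_X(x) \to 0$ as $\|x\| \to \infty$. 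Hence the inner integral vanishes for every $x_{-i}$, the outer integral is zero, and the $i$-th component is zero; repeating over all $i \in \{1, \ldots, d\}$ gives the claim.

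I expect the main obstacle to be the rigorous bookkeeping of the analytic hypotheses rather than any conceptual difficulty: specifically, justifying the interchange of integration order (Fubini) and the vanishing of the boundary terms requires the ``decays sufficiently rapidly'' assumption to be made quantitative, and it suffices that $f_X$ and each $\partial_i f_X$ be integrable and that $f_X$ tend to $0$ along every coordinate direction. A secondary point worth stating cleanly is the treatment of the region $\{f_X = 0\}$, where $\log f_X$ is undefined; handling it via the fact that the score is integrated against $f_X$ (so the problematic region carries no mass) and that $\nabla_x f_X$ vanishes on the interior of that region keeps the argument clean and avoids any appeal to strict positivity of the density.
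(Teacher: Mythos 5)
Your proof is correct, and it takes a genuinely different route for the key analytic step. Both you and the paper begin identically, using $\nabla_x \log f_X = \nabla_x f_X / f_X$ to reduce the claim to $\int_{\mathbb{R}^d} \nabla_x f_X(x)\,dx = 0$. But where the paper then applies the divergence theorem on expanding balls $B_R(0)$ and argues that the resulting surface integral over $\partial B_R(0)$ vanishes as $R \to \infty$, you instead prove the identity componentwise via Fubini and the one-dimensional fundamental theorem of calculus, killing the boundary terms at $\pm\infty$ by the decay of $f_X$. Your route is more elementary (no vector-valued surface integrals) and makes the genuinely needed hypotheses more transparent: integrability of each $\partial_i f_X$ plus decay of $f_X$ along coordinate lines, whereas the sphere-integral route needs $f_X$ to decay fast enough to beat the $R^{d-1}$ growth of the sphere's surface area --- which is why the paper has to strengthen its own hypothesis mid-proof to ``decays faster than any polynomial.'' (Neither proof fully closes the gap between the lemma's stated qualitative hypothesis and the quantitative decay actually used; you at least flag this explicitly.) One quibble: your treatment of $\{f_X = 0\}$, a point of rigor the paper skips entirely, contains an unnecessary and in general false claim --- the topological boundary of the zero set of a $C^1$ nonnegative function need not be Lebesgue-null (consider a smooth function vanishing exactly on a fat Cantor set). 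Fortunately you do not need it: since $f_X \geq 0$ is differentiable, \emph{every} point of $\{f_X = 0\}$ (interior or not) is a global minimum, so $\nabla_x f_X$ vanishes on the entire zero set, and the extension of the integration domain to all of $\mathbb{R}^d$ goes through directly.
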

\begin{proof}
    See Appendix xyz
\end{proof}
\end{tcolorbox}

Let \( X \in \mathbb{R}^d \) be a random variable with density \( f_X(x) \), and let \( Z = G(X) \in \mathbb{R}^m \) be the output of a randomized mechanism with conditional density \( p_Z(z \mid x) \). Suppose,
 \( \log f_X(x) \) and \( \log p_Z(z \mid x) \) are twice differentiable,
   the mechanism satisfies the pointwise gradient bound \( \|\nabla_x \log p_Z(z \mid x)\|_2^2 \leq \varepsilon^2 \),
 the Fisher information matrix \( \mathcal{I}(f_X) = \mathbb{E}_{x}[\nabla_x \log f_X(x) \nabla_x \log f_X(x)^T] \) is symmetric and finite. Let \( A : \mathbb{R}^m \to \mathbb{R}^d \) be any estimator, with conditional mean \( \mu(x) = \mathbb{E}[A(z) \mid x] \) and Jacobian \( J_\mu(x) = \nabla_x \mu(x) \). Assume, further that the null spaces of \( \mathcal{I}_{Z|X}(x) \) and \( \mathcal{I}(f_X) \) intersect trivially. Then the reconstruction error satisfies, the lower bound stated below. 
\begin{tcolorbox}[colback=cyan!10, colframe=black]

\begin{theorem}
The reconstruction error is lower-bounded as follows.
\[
R(A) = \mathbb{E}_{x,z}[\|A(z) - x\|^2] \geq \mathbb{E}_x[\operatorname{Tr}(J_\mu(x)(\mathcal{I}_{Z|X}(x) + \mathcal{I}(f_X))^{-1} J_\mu(x)^T) + \|\mu(x) - x\|^2].
\]
In the special case where \( \mu(x) = x \), this simplifies to
\[
R(A) \geq \operatorname{Tr}((\mathcal{I}_{Z|X}(x) + \mathcal{I}(f_X))^{-1}) \geq \frac{d^2}{\varepsilon^2 + \operatorname{Tr}(\mathcal{I}(f_X))}.
\]
\end{theorem}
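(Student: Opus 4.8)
The plan is to combine a bias--variance decomposition with a Cram\'er--Rao argument, which is exactly why the two preceding lemmas are stated. First I would condition on $x$ and write $A(z) - x = (A(z) - \mu(x)) + (\mu(x) - x)$. Expanding the square and taking $\mathbb{E}_{z \sim p_Z(\cdot\mid x)}$, the cross term vanishes since $\mathbb{E}_{z\mid x}[A(z) - \mu(x)] = 0$ by definition of $\mu$, leaving $\mathbb{E}_{z\mid x}\|A(z)-x\|^2 = \mathbb{E}_{z\mid x}\|A(z)-\mu(x)\|^2 + \|\mu(x)-x\|^2$. The first term equals $\operatorname{Tr}(\operatorname{Cov}(A(z)\mid x))$ by the first lemma. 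Taking $\mathbb{E}_x$ yields the exact identity $R(A) = \mathbb{E}_x[\operatorname{Tr}(\operatorname{Cov}(A(z)\mid x)) + \|\mu(x)-x\|^2]$, which already isolates the bias term in the claim, so everything reduces to lower-bounding the conditional covariance.

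For that covariance I would use the matrix Cram\'er--Rao (covariance) inequality. Fix $x$, let $s(x,z) = \nabla_x \log p_Z(z\mid x)$ be the conditional score, which has zero conditional mean and covariance $\mathcal{I}_{Z\mid X}(x)$ under the usual regularity. Differentiating $\mu(x)=\int A(z)\,p_Z(z\mid x)\,dz$ under the integral sign gives $\mathbb{E}_{z\mid x}[(A(z)-\mu(x))\,s(x,z)^\top] = J_\mu(x)$, and the positive semidefiniteness of the joint covariance of $(A(z)-\mu(x),\,s(x,z))$ then yields $\operatorname{Cov}(A(z)\mid x) \succeq J_\mu(x)\,\mathcal{I}_{Z\mid X}(x)^{-1}\,J_\mu(x)^\top$. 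To bring in the prior information and to guarantee invertibility I would add the PSD matrix $\mathcal{I}(f_X)$: operator monotonicity of matrix inversion gives $(\mathcal{I}_{Z\mid X}(x)+\mathcal{I}(f_X))^{-1}\preceq \mathcal{I}_{Z\mid X}(x)^{-1}$, so conjugating by $J_\mu(x)$ and taking the order-preserving trace produces $\operatorname{Tr}(\operatorname{Cov}(A(z)\mid x)) \geq \operatorname{Tr}(J_\mu(x)(\mathcal{I}_{Z\mid X}(x)+\mathcal{I}(f_X))^{-1}J_\mu(x)^\top)$. Here the hypothesis that the null spaces of $\mathcal{I}_{Z\mid X}(x)$ and $\mathcal{I}(f_X)$ meet only at $0$ is what forces the sum to be strictly positive definite, hence invertible. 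Substituting into the identity from the first step gives the general bound.

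For the special case I set $\mu(x)=x$, so $J_\mu(x)=I_d$ and the bias term vanishes, leaving $R(A)\geq \mathbb{E}_x[\operatorname{Tr}((\mathcal{I}_{Z\mid X}(x)+\mathcal{I}(f_X))^{-1})]$. I would then apply the scalar trace--inverse inequality $\operatorname{Tr}(M^{-1})\geq d^2/\operatorname{Tr}(M)$, which follows from Cauchy--Schwarz (equivalently AM--HM) applied to the eigenvalues of the positive definite $M=\mathcal{I}_{Z\mid X}(x)+\mathcal{I}(f_X)$, and bound the denominator via the gradient hypothesis: $\operatorname{Tr}(\mathcal{I}_{Z\mid X}(x)) = \mathbb{E}_{z\mid x}\|\nabla_x \log p_Z(z\mid x)\|_2^2 \leq \varepsilon^2$, so $\operatorname{Tr}(M)\leq \varepsilon^2 + \operatorname{Tr}(\mathcal{I}(f_X))$. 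Since this lower bound is constant in $x$, the outer $\mathbb{E}_x$ preserves it, giving $R(A)\geq d^2/(\varepsilon^2+\operatorname{Tr}(\mathcal{I}(f_X)))$.

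The hard part will be the regularity justification in the Cram\'er--Rao step: legitimizing the exchange of differentiation and integration that produces $\mathbb{E}_{z\mid x}[(A(z)-\mu(x))s^\top] = J_\mu(x)$ together with the zero-mean score identities, and ensuring the boundary terms vanish. This is precisely where the second lemma ($\mathbb{E}_x[\nabla_x\log f_X(x)]=0$) and the decay conditions $f_X,\nabla_x f_X\to 0$ at infinity are used, since they underwrite the integration by parts that makes the prior score well-behaved. The remaining subtlety is invertibility when $\mathcal{I}_{Z\mid X}(x)$ alone is singular, which is exactly the situation the trivial-null-space-intersection hypothesis is designed to rescue; once invertibility is secured, the covariance-inequality, monotonicity, and trace steps are routine.
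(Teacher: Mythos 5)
Your opening and closing steps coincide with the paper's proof: the bias--variance decomposition conditional on $x$ (with Lemma~1 converting the variance term into $\operatorname{Tr}(\operatorname{Cov}(A(z)\mid x))$), the AM--HM/Jensen step $\operatorname{Tr}(M^{-1})\ge d^2/\operatorname{Tr}(M)$, and the bound $\operatorname{Tr}(\mathcal{I}_{Z|X}(x))\le\varepsilon^2$ from the pointwise gradient hypothesis are all exactly what the paper does. The divergence --- and the genuine gap --- is in the middle step that lower-bounds the conditional covariance. The paper invokes the van Trees (Bayesian Cram\'er--Rao) inequality, which produces $J_\mu(x)\bigl(\mathcal{I}_{Z|X}(x)+\mathcal{I}(f_X)\bigr)^{-1}J_\mu(x)^\top$ directly: the prior Fisher information is added to the conditional one \emph{before} any inversion takes place. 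You instead route through the frequentist biased-estimator Cram\'er--Rao bound $\operatorname{Cov}(A(z)\mid x)\succeq J_\mu(x)\,\mathcal{I}_{Z|X}(x)^{-1}J_\mu(x)^\top$ and then weaken it via the antitonicity $(\mathcal{I}_{Z|X}(x)+\mathcal{I}(f_X))^{-1}\preceq\mathcal{I}_{Z|X}(x)^{-1}$. Both of those steps invert $\mathcal{I}_{Z|X}(x)$ itself, so they require $\mathcal{I}_{Z|X}(x)\succ 0$ pointwise --- and that is precisely what the theorem's hypotheses do \emph{not} grant. The trivial-null-space-intersection assumption only makes the \emph{sum} $M=\mathcal{I}_{Z|X}(x)+\mathcal{I}(f_X)$ invertible; it deliberately allows $\mathcal{I}_{Z|X}(x)$ to be singular (a mechanism insensitive to some direction of $x$ is entirely natural under the constraint $\operatorname{Tr}(\mathcal{I}_{Z|X}(x))\le\varepsilon^2$), with the prior supplying information in the missing directions. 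So your closing remark that the null-space hypothesis ``rescues'' invertibility conflates two different matrices: it rescues $M$, which is what van Trees needs, not $\mathcal{I}_{Z|X}(x)$, which is what your chain needs at its first link. The gap is patchable --- the joint covariance of $(A(z)-\mu(x),\,s(x,z))$ being PSD yields the pseudoinverse bound $\operatorname{Cov}(A(z)\mid x)\succeq J_\mu(x)\,\mathcal{I}_{Z|X}(x)^{+}J_\mu(x)^\top$ together with the range condition $\operatorname{range}(J_\mu(x)^\top)\subseteq\operatorname{range}(\mathcal{I}_{Z|X}(x))$, and a variational argument ($u^\top M^{-1}u=\max_w\{2w^\top u-w^\top Mw\}$) then passes from $\mathcal{I}_{Z|X}(x)^{+}$ to $M^{-1}$ on that range --- but none of that machinery is in your write-up, and naive operator antitonicity fails for pseudoinverses.

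A smaller misattribution: in your route, Lemma~2 ($\mathbb{E}_x[\nabla_x\log f_X(x)]=0$) and the decay conditions on $f_X$ are never actually used --- the frequentist Cram\'er--Rao step only needs differentiation under the integral in $z$, and the prior enters solely through $\mathcal{I}(f_X)\succeq 0$. Those hypotheses are consumed by the van Trees inequality itself, whose derivation integrates by parts against the prior density; that is why the paper cites ``regularity and Lemma~2'' exactly at the point where van Trees is invoked. Replacing your middle step with van Trees, as the paper does, simultaneously closes the invertibility gap and puts Lemma~2 where it actually belongs.
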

\begin{proof}
    See Appendix xyz
\end{proof}
\end{tcolorbox}

\section{Empirical Calibration of the reconstruction prevention bound}
Let $X \in \mathbb{R}^d$ be a random variable with unknown density $f_X(x)$, and let $x_1, \dots, x_n$ be i.i.d. samples drawn from $f_X$. Let $K : \mathbb{R}^d \to \mathbb{R}_{\geq 0}$ be a continuously differentiable, symmetric kernel with compact support and finite second moments. For a bandwidth $h > 0$, define the kernel density estimator
$
\widehat{f}_X(x) := \frac{1}{n h^d} \sum_{i=1}^n K\left( \frac{x - x_i}{h} \right).
$
Let $\widehat{s}(x) := \nabla_x \log \widehat{f}_X(x)$ be the estimated score function and define the empirical Fisher information estimator by
$
\widehat{\mathcal{I}}(f_X) := \frac{1}{n} \sum_{i=1}^n \widehat{s}(x_i) \widehat{s}(x_i)^\top.
$
Then under standard conditions on $K$, $f_X$, and the bandwidth $h$ (e.g., $h \to 0$, $nh^d \to \infty$), the estimator $\widehat{\mathcal{I}}(f_X)$ converges in probability to the true Fisher information matrix $\mathcal{I}(f_X)$. 
\begin{tcolorbox}[colback=cyan!10, colframe=black]
\begin{theorem}
The reconstruction error of any such density estimator $A$ under an $\varepsilon$-Lipschitz mechanism satisfies
\[
\mathcal{R}(A) \geq \frac{d^2}{\varepsilon^2 + \operatorname{Tr}(\widehat{\mathcal{I}}(f_X)) + \frac{c_1^2}{n h^{d+4}}},
\]
where $c_1 > 0$ is a constant depending on the bias $b_d(x)$ and variance $\sigma_d^2(x)$ of the kernel estimator and on kernel shape.
\end{theorem}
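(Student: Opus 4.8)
The plan is to reduce the statement to the population-level reconstruction bound established in the previous theorem, namely $\mathcal{R}(A) \geq \frac{d^2}{\varepsilon^2 + \operatorname{Tr}(\mathcal{I}(f_X))}$, and then to replace the unknown quantity $\operatorname{Tr}(\mathcal{I}(f_X))$ by its empirical surrogate $\operatorname{Tr}(\widehat{\mathcal{I}}(f_X))$ while paying explicitly for the estimation error. The key structural observation is that $\operatorname{Tr}(\mathcal{I}(f_X))$ sits in the \emph{denominator}, so the map $t \mapsto d^2/(\varepsilon^2 + t)$ is decreasing; consequently, to preserve a valid lower bound on $\mathcal{R}(A)$ it suffices to produce an \emph{upper} bound of the form $\operatorname{Tr}(\mathcal{I}(f_X)) \leq \operatorname{Tr}(\widehat{\mathcal{I}}(f_X)) + \frac{c_1^2}{nh^{d+4}}$, holding with high probability under the stated bandwidth conditions $h \to 0$, $nh^d \to \infty$. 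Substituting this into the denominator and invoking monotonicity then immediately yields the claimed inequality, so the entire argument hinges on certifying that slack term.

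First I would decompose $\operatorname{Tr}(\mathcal{I}(f_X)) - \operatorname{Tr}(\widehat{\mathcal{I}}(f_X))$ into a Monte Carlo part and a score-estimation part. Writing $s(x) = \nabla_x \log f_X(x)$ and $\widehat{s}(x) = \nabla_x \log \widehat{f}_X(x)$, the pointwise identity $\|\widehat{s}\|^2 = \|s\|^2 + 2\, s^\top(\widehat{s}-s) + \|\widehat{s}-s\|^2$ splits the deviation into (i) the sampling fluctuation of $\frac{1}{n}\sum_i \|s(x_i)\|^2$ around $\mathbb{E}[\|s(x)\|^2] = \operatorname{Tr}(\mathcal{I}(f_X))$, which is controlled by a concentration/law-of-large-numbers argument, and (ii) the plug-in error driven by $\widehat{s}-s$. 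The nonnegative cross and square terms are what the slack $\frac{c_1^2}{nh^{d+4}}$ must absorb.

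Second, I would bound the score-estimation error via a bias--variance analysis of the kernel estimator and its gradient. Linearizing the ratio through a delta-method expansion, $\widehat{s}-s \approx (\nabla \widehat{f}_X - \nabla f_X)/f_X - s\,(\widehat{f}_X - f_X)/f_X$, isolates the dominant contribution as the variance of the gradient estimator $\nabla \widehat{f}_X$, which carries extra powers of $h^{-1}$ relative to $\widehat{f}_X$ because differentiating $K((x-x_i)/h)$ produces a factor $1/h$. Collecting the squared bias term governed by $b_d(x) = O(h^2)$ and the variance term governed by $\sigma_d^2(x) = O(1/(nh^{d+2}))$, and inflating by the $1/\widehat{f}_X$ normalization and kernel-shape constants (such as $\int\|\nabla K\|^2$ and the second moments of $K$), yields a common upper bound of order $1/(nh^{d+4})$, with $c_1$ merging the bias and variance prefactors. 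This explains why $c_1$ is stated to depend jointly on $b_d(x)$, $\sigma_d^2(x)$, and the kernel shape.

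The hard part will be making the ratio estimator $\nabla \widehat{f}_X/\widehat{f}_X$ rigorous where $\widehat{f}_X$ is small: the linearization is only valid away from the zero set of the density, so a careful treatment needs either a lower-truncation of $\widehat{f}_X$, a restriction to a region where $f_X$ is bounded below, or control of the tail contribution using the rapid-decay hypotheses already invoked in the preceding lemmas. A second delicate point is the precise bookkeeping of the $h$-exponent: because the Fisher-information functional \emph{squares} the score and the kernel must be differentiated, the relevant object is the second moment of the gradient-estimator error, which is exactly where the additional $h^{-2}$ enters to turn the naive $h^{d+2}$ rate into the stated $h^{d+4}$. Once uniform control of the score error is secured on the relevant region, the concentration of the empirical average and the monotone substitution into the denominator are routine.
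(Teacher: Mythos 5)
Your proposal follows essentially the same route as the paper's proof: reduce to the population-level bound $\mathcal{R}(A) \geq d^2/(\varepsilon^2 + \operatorname{Tr}(\mathcal{I}(f_X)))$, linearize $\widehat{s} - s$ via the ratio (delta-method) expansion $\widehat{s}-s \approx \delta^{(1)}/f - s\,\delta/f$, invoke standard KDE bias--variance rates to bound the trace deviation by $O\!\left(1/(n h^{d+4})\right)$, and substitute into the denominator using monotonicity of $t \mapsto d^2/(\varepsilon^2+t)$. If anything, your version is more careful than the paper's — you explicitly separate the Monte Carlo fluctuation of $\frac{1}{n}\sum_i \|s(x_i)\|^2$ from the plug-in score-estimation error, and you flag the low-density regions where the linearization of $\nabla\widehat{f}_X/\widehat{f}_X$ needs truncation, both of which the paper silently glosses over — but the skeleton of the argument is identical.
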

\begin{proof}
    See Appendix xyz
\end{proof}
\end{tcolorbox}

\subsection{Additional Experiments and Results} 
\label{addexpres}
\begin{figure*}[!htbp]
    \centering
\begin{floatrow}\ffigbox{\includegraphics[scale=0.41]{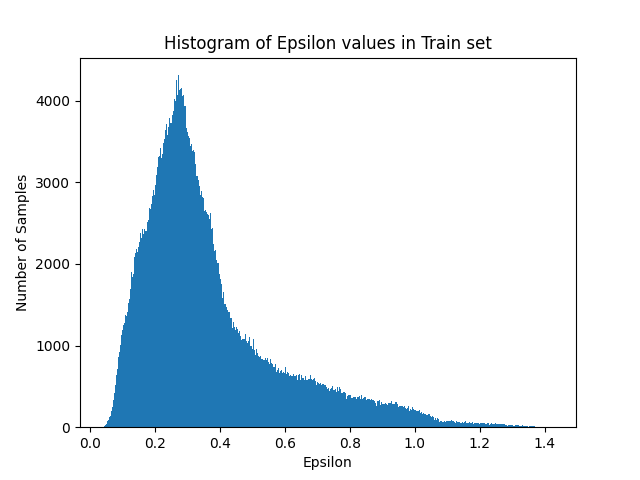}}{\caption{Train Histogram of $\epsilon$ for PowerLearn Embeddings}\label{fig:train_hist}}
      \caption{Train Histogram of $\epsilon$ for PowerLearn Embeddings}
    \label{fig:train_hist}
    \includegraphics[scale=0.41]{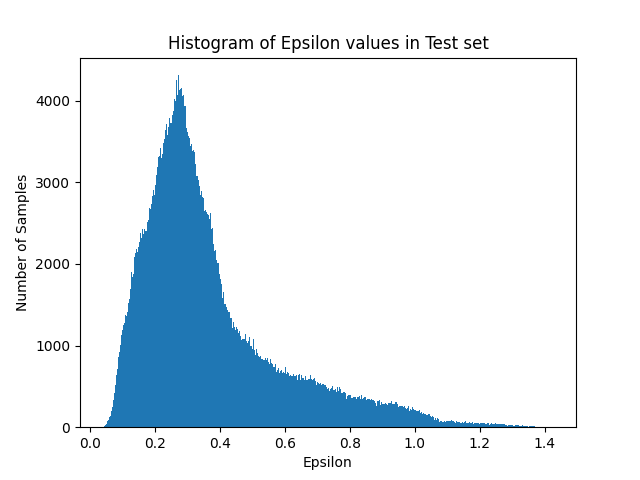}
    \label{fig:test_hist}
    \end{floatrow}

\end{figure*}

\begin{figure*}[!htbp]
    \centering
    \begin{floatrow}
    \includegraphics[scale=0.04]{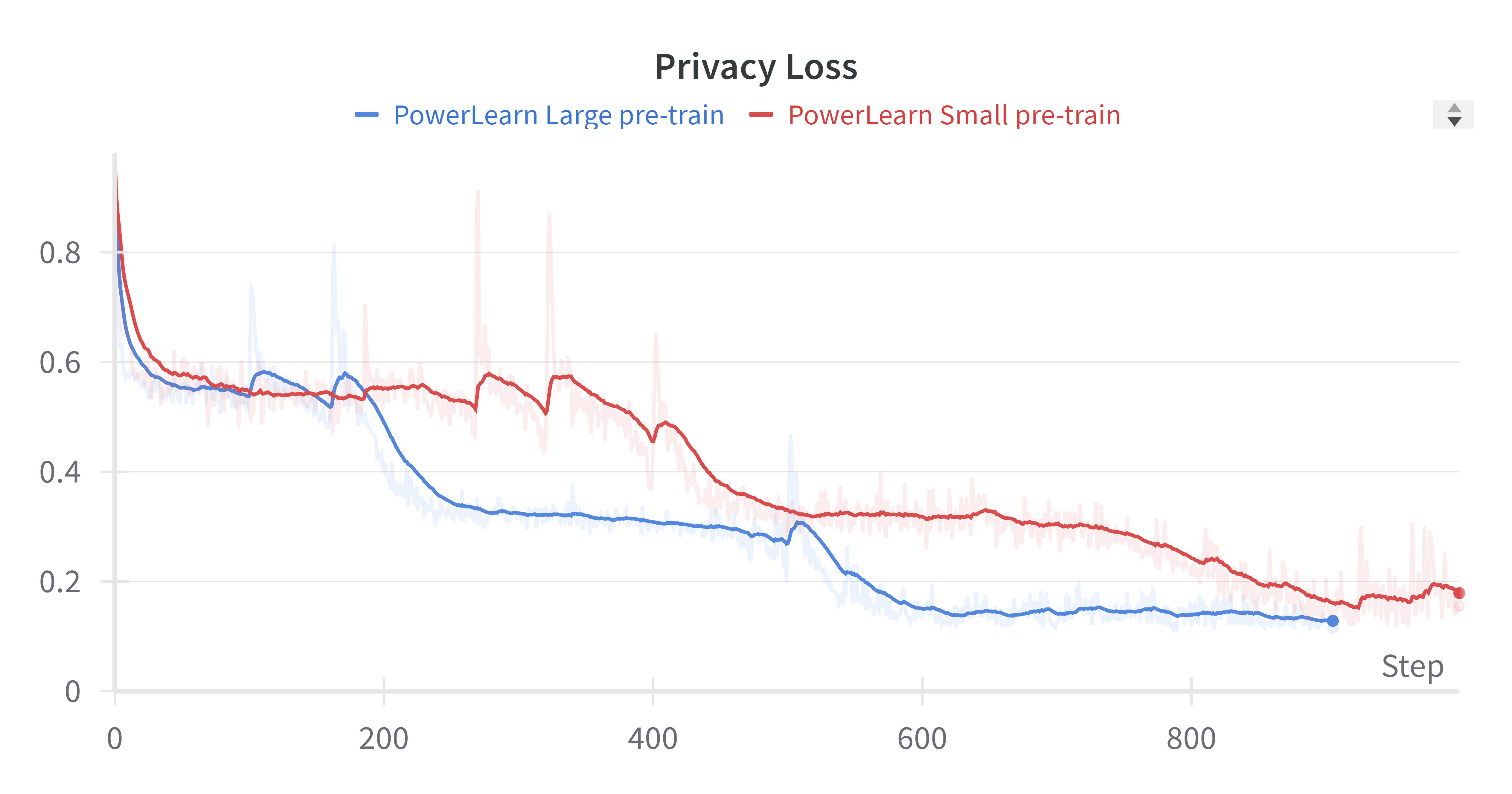}
      \caption{Convergence of privacy and utility losses on client model}
    \label{fig:priv_conv}
     \includegraphics[scale=0.04]{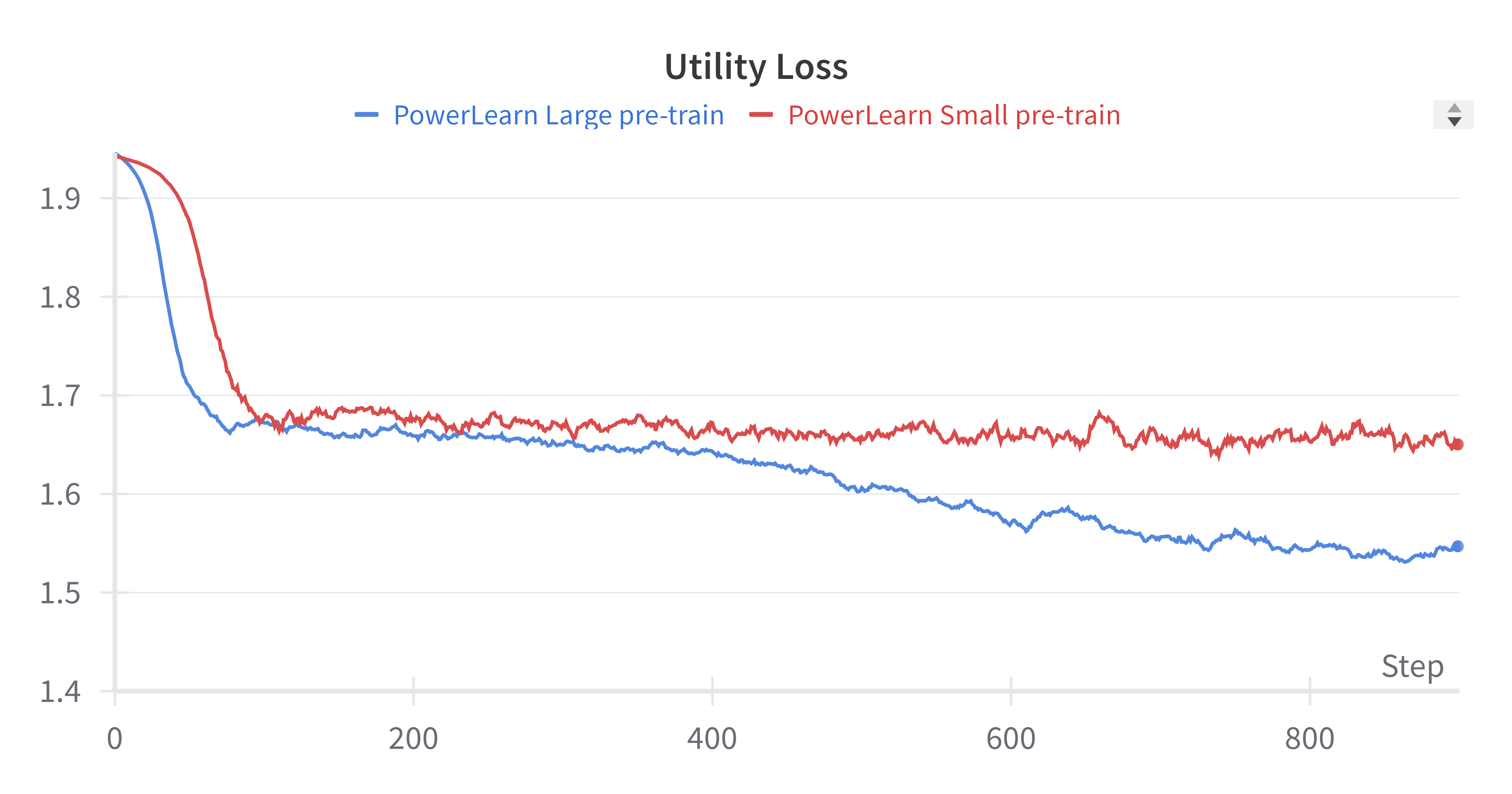}
    \label{fig:priv_loss_conv}
    \end{floatrow}
 \end{figure*}
 \begin{figure*}[!htbp]
    \centering
    \begin{floatrow}
    \includegraphics[scale=0.04]{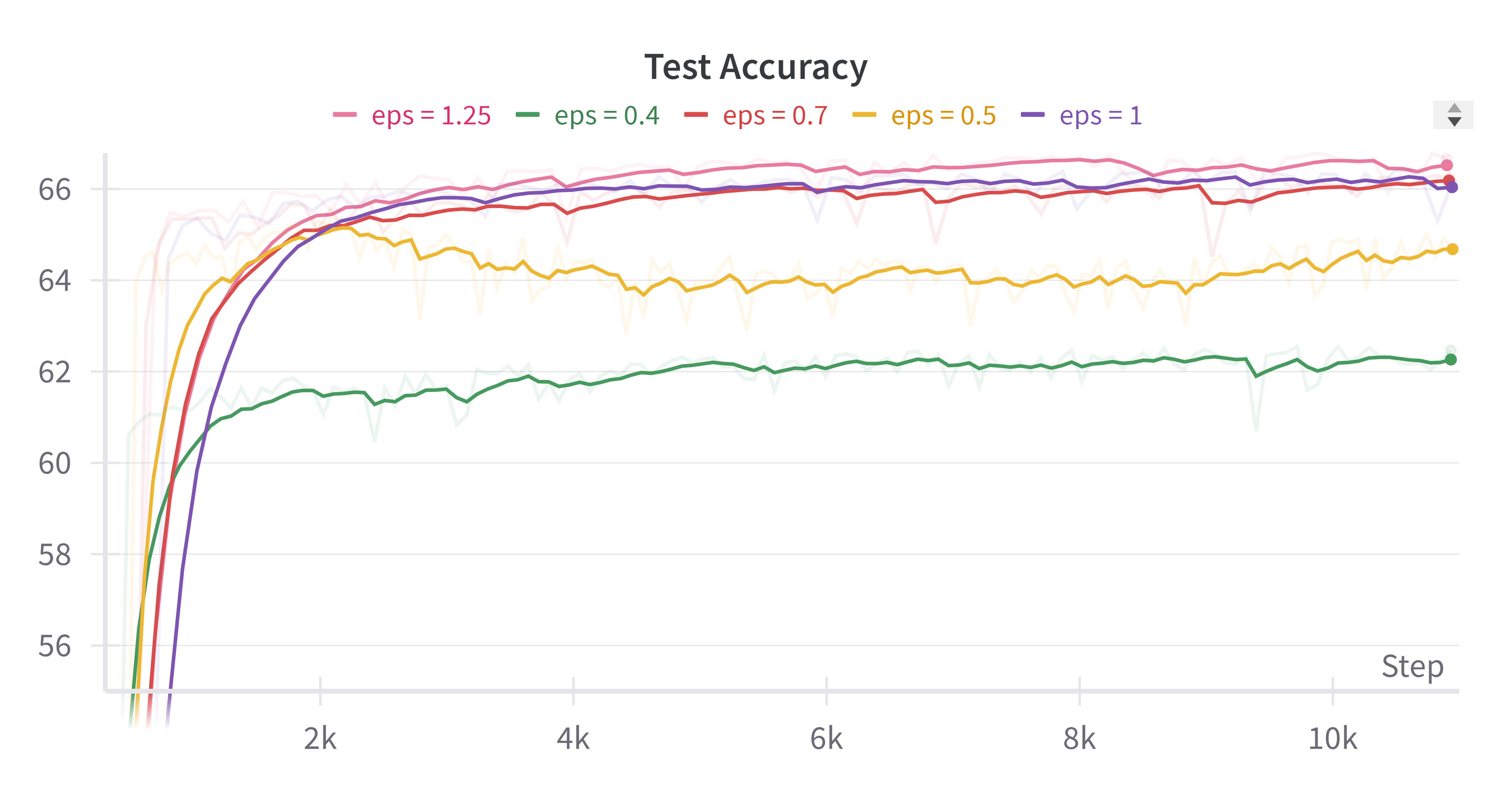}
      \caption{Convergence of test accuracies on the server model}
    \label{fig:priv_conv}
     \includegraphics[scale=0.04]{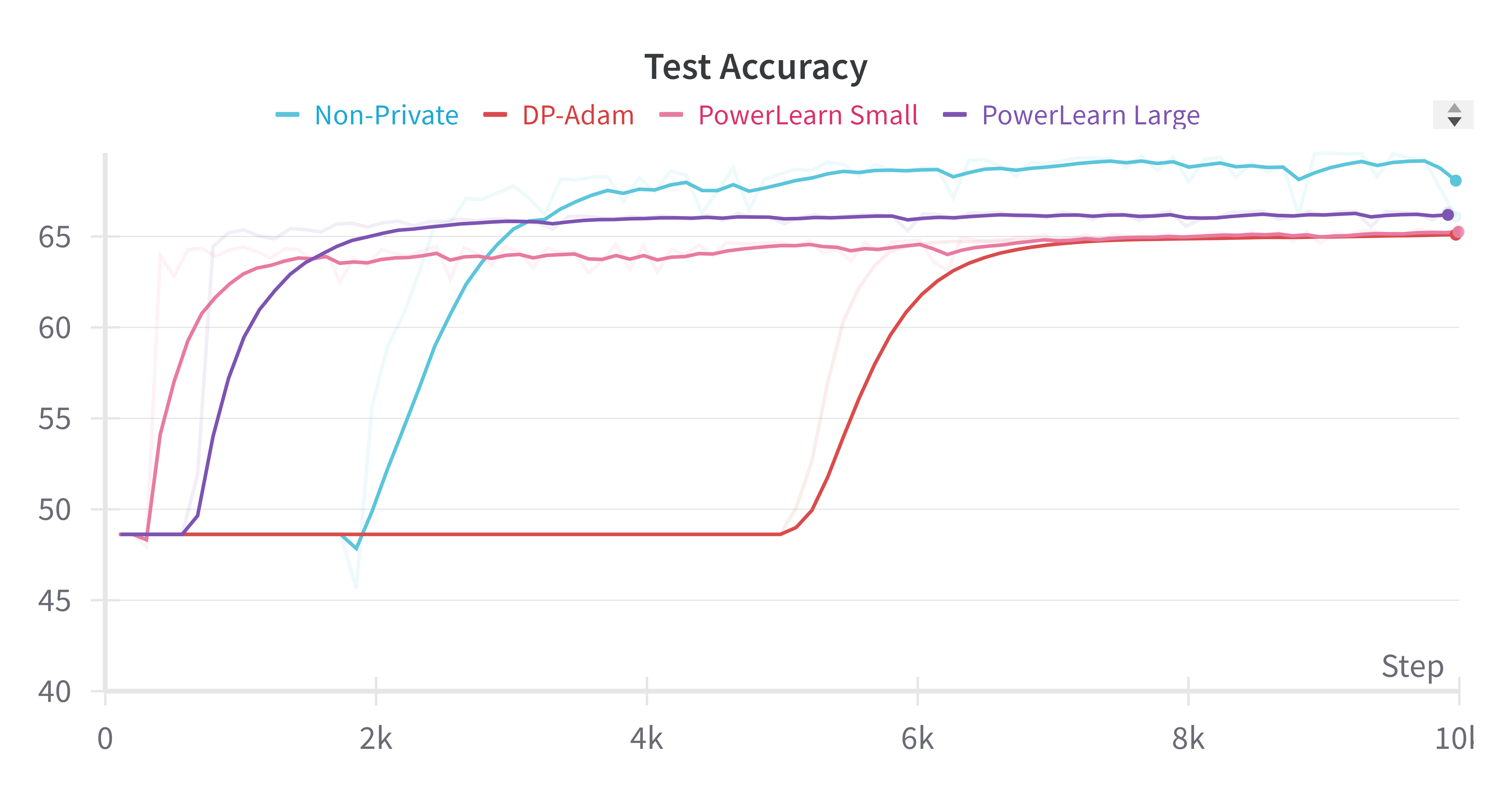}
    \label{fig:priv_acc_conv}
    \end{floatrow}
 \end{figure*}
 \begin{figure}[ht]
    \centering
    {\includegraphics[scale=0.05]{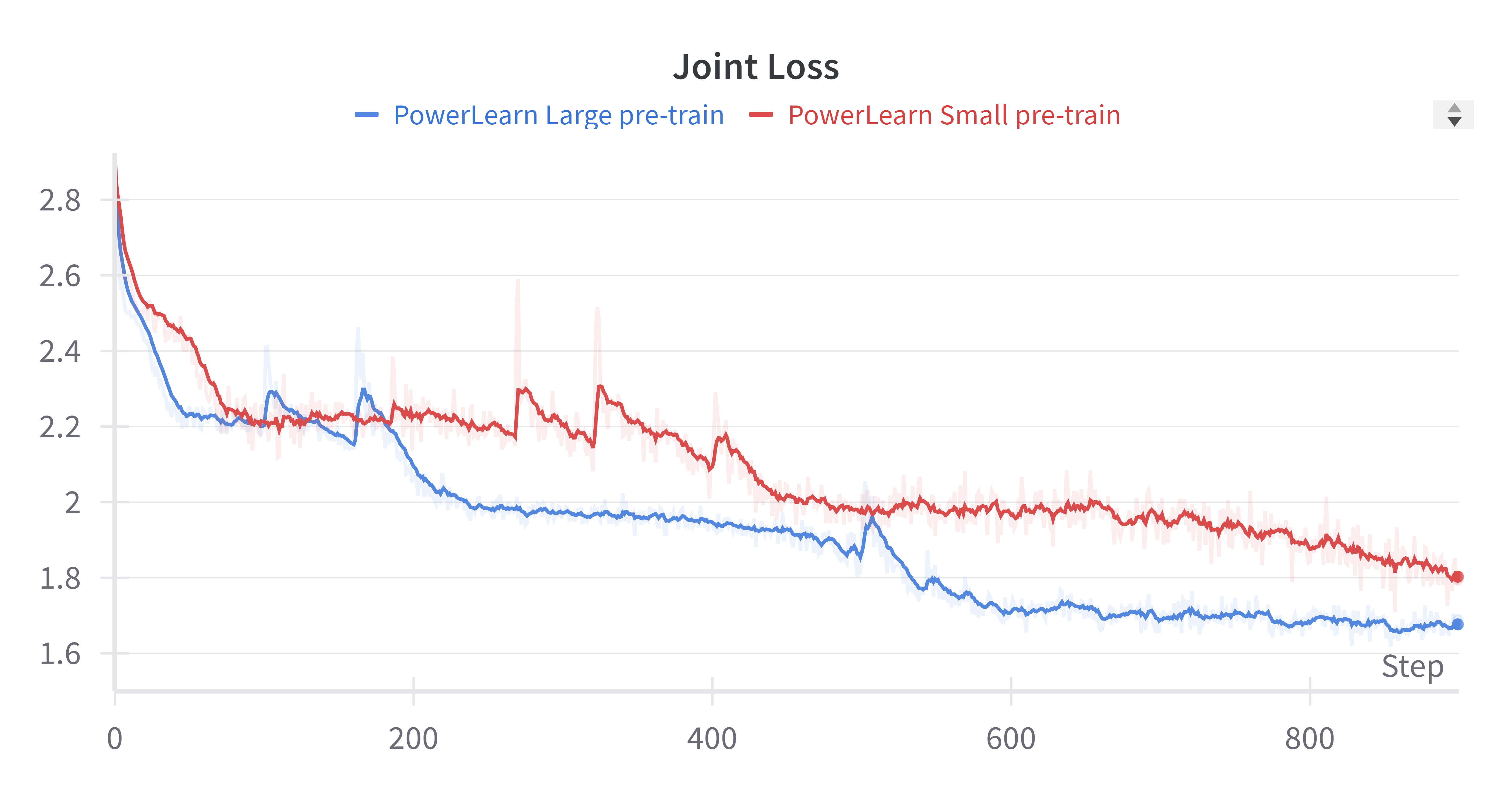}}\caption{Convergence of the joint loss used based on a combination of the privacy loss and utility loss.}
    \label{fig:foobar}
\end{figure}

\textbf{Theorem (Convergence of the Power Mechanism for a Two-Layer Neural Network with One Linear and One Nonlinear Layer).}

We now specialize to the case where the privatizer is a two-layer multilayer perceptron given by
\[
G_\theta(x) = \phi(W_2 W_1 x + b_2)
\]
where \( \phi = \tanh \) is applied elementwise. The first layer is given by \( g_0(x) = W_1 x \), which has Jacobian \( J_0 = W_1 \), and the second layer is given by \( g_1(w) = \phi(W_2 w + b_2) \), with Jacobian \( J_1 = D_2 W_2 \), where \( D_2 = \mathrm{diag}(\phi'(a)) \) and \( a = W_2 W_1 x + b_2 \). Since \( W_1 \) is constant with respect to \( x \), we have \( \nabla_x \log |\det J_0| = 0 \), so the only contribution to the privacy loss comes from \( J_1 \).

We now compute
\[
\log |\det D_2| = \sum_{i=1}^m \log \phi'(a_i)
\]
which implies
\[
\nabla_x \log |\det D_2| = \sum_{i=1}^m \frac{\phi''(a_i)}{\phi'(a_i)} \nabla_x a_i.
\]
Since each \( a_i = W_{2, i, :} W_1 x + b_{2,i} \), we have \( \nabla_x a_i = W_{2,i,:} W_1 \). Thus,
\[
\nabla_x \log |\det J_1(x)| = \sum_{i=1}^m \frac{\phi''(a_i)}{\phi'(a_i)} W_{2,i,:} W_1.
\]
This gives the exact expression for the privacy loss:
\[
\mathcal{L}_P(\theta) = \left\| \nabla_x \log f_X(x) - \sum_{i=1}^m \frac{\phi''(a_i)}{\phi'(a_i)} W_{2,i,:} W_1 \right\|_2.
\]

To bound this expression, we observe that for \( \phi(z) = \tanh(z) \), we have
\[
\phi'(z) = 1 - \tanh^2(z)
\quad \text{and} \quad
\phi''(z) = -2 \tanh(z)(1 - \tanh^2(z)).
\]
Therefore,
\[
\left| \frac{\phi''(z)}{\phi'(z)} \right| = 2 |\tanh(z)| \le 2
\]
because \( \tanh(z) \in (-1, 1) \). Letting \( \xi_i = \frac{\phi''(a_i)}{\phi'(a_i)} \), we obtain
\[
\left\| \sum_{i=1}^m \xi_i W_{2,i,:} W_1 \right\|
\le \sum_{i=1}^m |\xi_i| \cdot \|W_{2,i,:} W_1\|
\le 2 \sum_{i=1}^m \|W_{2,i,:} W_1\|.
\]
By submultiplicativity of matrix norms, we have
\[
\|W_{2,i,:} W_1\| \le \|W_{2,i,:}\|_2 \cdot \|W_1\|_2.
\]
Hence,
\[
\sum_{i=1}^m \|W_{2,i,:} W_1\| \le \|W_1\|_2 \sum_{i=1}^m \|W_{2,i,:}\|_2.
\]
Applying the Cauchy-Schwarz inequality gives
\[
\sum_{i=1}^m \|W_{2,i,:}\|_2 \le \sqrt{m} \left( \sum_{i=1}^m \|W_{2,i,:}\|_2^2 \right)^{1/2} = \sqrt{m} \cdot \|W_2\|_F.
\]
If we assume \( \|W_1\|_2 \le \sqrt{h} \) and \( \|W_2\|_F \le \sqrt{m} \), then we obtain
\[
\left\| \sum_{i=1}^m \frac{\phi''(a_i)}{\phi'(a_i)} W_{2,i,:} W_1 \right\| \le 2 \sqrt{m} \cdot \sqrt{m} \cdot \sqrt{h} = 2 m \sqrt{h}.
\]
Therefore, we conclude that
\[
L_P \le (2 m \sqrt{h})^2 = 4 m^2 h.
\]

We now turn to bounding the smoothness of the utility loss. Let
\[
\mathcal{L}_U(y, \hat{y}) = -\sum_i y_i \log \hat{y}_i
\]
with prediction
\[
\hat{y} = \mathrm{softmax}(W_3 z + b_3).
\]
Then
\[
\nabla_z \mathcal{L}_U = W_3^\top (\hat{y} - y)
\]
and so
\[
\|\nabla_z \mathcal{L}_U\| \le \|W_3\|_2 \le \sqrt{c}.
\]
Since \( z = \phi(W_2 W_1 x + b_2) \), the chain rule yields
\[
\left\| \frac{\partial z}{\partial \theta} \right\| \le \|W_1\|_2 \cdot \|D_2\| \le \sqrt{h}
\]
because \( \|D_2\| \le 1 \). Hence
\[
L_U \le \|W_3\|_2^2 \cdot \left\| \frac{\partial z}{\partial \theta} \right\|^2 \le c h.
\]

Finally, we analyze convergence of stochastic gradient descent. Let \( \mathcal{L}(\theta) = \mathcal{L}_P(\theta) + \lambda \mathcal{L}_U(\theta) \), and suppose that
\[
\mathbb{E}[g_t \mid \theta_t] = \nabla \mathcal{L}(\theta_t)
\quad \text{and} \quad
\mathbb{E}[\|g_t - \nabla \mathcal{L}(\theta_t)\|^2] \le \sigma^2.
\]
Let the step size \( \eta \) satisfy \( \eta < 1/L \), where \( L = 4 m^2 h + \lambda c h \). The descent lemma for \(L\)-smooth functions gives
\[
\mathbb{E}[\mathcal{L}(\theta_{t+1})] \le \mathbb{E}[\mathcal{L}(\theta_t)] - \left(\eta - \frac{L \eta^2}{2}\right) \mathbb{E}[\|\nabla \mathcal{L}(\theta_t)\|^2] + \frac{L \eta^2 \sigma^2}{2}.
\]
Summing over \( t = 0 \) to \( T - 1 \) and dividing by \( T \), we find
\[
\frac{1}{T} \sum_{t=0}^{T-1} \mathbb{E}[\|\nabla \mathcal{L}(\theta_t)\|^2]
\le \frac{\mathcal{L}(\theta_0) - \mathcal{L}^*}{\eta(1 - L \eta / 2) T} + \frac{L \eta \sigma^2}{2(1 - L \eta / 2)}.
\]
Therefore, the convergence rate of SGD applied to the Power Mechanism loss is
\[
\min_{0 \le t < T} \mathbb{E}[\|\nabla \mathcal{L}(\theta_t)\|^2]
\le \frac{\mathcal{L}(\theta_0) - \mathcal{L}^*}{\eta(1 - L \eta / 2) T} + \frac{L \eta \sigma^2}{2(1 - L \eta / 2)}.
\]

\section{Experiments}   \label{sec:exp}
\begin{table*}[!htbp]
    \centering
    \begin{tabular}{|c|c|c|c|c|} \hline 
         $\epsilon$&  \textbf{PL-NN}&  \textbf{PL-RF}&  \textbf{PL-XGB}& \textbf{DP-ADAM}
\\ \hline 
0.35& 52.98 $\pm$ 0.02 & 65.96 $\pm$ 0.49 & 71.72 $\pm$ 0.20 & 64.81 $\pm$ 0.01
\\\hline 
0.40& 63.26 $\pm$ 0.91 & 66.95 $\pm$ 0.15 & 76.42 $\pm$ 0.03 & 64.89 $\pm$ 0.06
\\ \hline 
0.50& 65.07 $\pm$ 0.47 & 69.58 $\pm$ 0.49 & 81.94 $\pm$ 0.24 & 65.10 $\pm$ 0.14
\\ \hline 
0.70& 66.25 $\pm$ 0.02 & 73.42 $\pm$ 0.15 & 83.98 $\pm$ 0.29 & 65.38 $\pm$ 0.10
\\ \hline 
1.00& 66.79 $\pm$ 0.21 & 73.81 $\pm$ 0.42 & 85.71 $\pm$ 0.21 & 65.43 $\pm$ 0.13
\\ \hline
1.25& 67.00 $\pm$ 0.06 & 74.02 $\pm$ 0.28 & 85.84 $\pm$ 0.13 & 65.62 $\pm$ 0.04
\\ \hline
    \end{tabular}
    \label{tab:priv_acc_cover}

    \centering
    \begin{tabular}{|c|c|c|c|c|} \hline 
        $\epsilon$&  \textbf{PL-NN}&  \textbf{PL-RF}&  \textbf{PL-XGB}& \textbf{DP-ADAM}
\\ \hline 
0.50& 70.73 $\pm$ 0.02 & 62.51 $\pm$ 2.23 & 56.58 $\pm$ 6.89 & 69.58 $\pm$ 0.49
\\ \hline 
0.75& 74.22 $\pm$ 0.10 & 73.25 $\pm$ 1.90 & 73.73 $\pm$ 1.89 & 70.06 $\pm$ 0.30
\\ \hline 
1.20& 81.77 $\pm$ 0.45 & 79.55 $\pm$ 0.78 & 79.32 $\pm$ 0.83 & 71.27 $\pm$ 1.08
\\ \hline 
1.50& 82.30 $\pm$ 0.09 & 81.46 $\pm$ 0.22 & 81.75 $\pm$ 0.29 & 72.62 $\pm$ 2.07
\\ \hline
    \end{tabular}
    \label{tab:priv_acc_higg}
    \centering
    \begin{tabular}{|c|c|c|c|c|} \hline 
         $\epsilon$&  \textbf{PL-NN}&  \textbf{PL-RF}&  \textbf{PL-XGB}& \textbf{DP-ADAM}
\\ \hline 
0.70& 78.95 $\pm$ 0.13 & 80.41 $\pm$ 0.26 & 77.00 $\pm$ 2.94 & 76.06 $\pm$ 0.01
\\ \hline 
1.00& 81.77 $\pm$ 0.13 & 80.94 $\pm$ 0.14 & 79.41 $\pm$ 1.21 & 76.09 $\pm$ 0.04
\\ \hline 
1.50& 82.14 $\pm$ 0.25 & 81.81 $\pm$ 0.14 & 81.19 $\pm$ 0.96 & 78.41 $\pm$ 1.43
\\ \hline 
3.00& 82.84 $\pm$ 0.05 & 82.27 $\pm$ 0.10 & 82.00 $\pm$ 0.55 & 82.40 $\pm$ 0.15
\\ \hline
    \end{tabular}
    \caption{Utility vs Epsilon: Forest Cover (top table), Higgs Boson (middle table) and Adult Income Datasets (bottom table).}
    \label{tab:priv_acc_inc}
\end{table*}

In our experiments, we evaluate our method against established differentially private training approaches, including those facilitating model weight release rather than activation release as in our case.
\label{subsec:expt_set} To assess the efficacy of our approach for private embedding sharing in collaborative learning, we benchmark it against conventional private and non-private training methods, including simple split learning-based techniques known to lack privacy safeguards. Our results demonstrate that our method effectively balances computational load between server and client while preserving client data privacy, thereby optimizing utility. For comprehensive information on datasets, experimental parameters, and supplementary findings, refer to Appendix \ref{experimentsDetails}.
\\\textbf{Datasets.}  \label{subsec:expt_data}
Our experiments are implemented on three publicly available tabular datasets: Forest Cover Type, Higgs Boson, and Census Income. The Forest Cover Type dataset challenges models to predict forest cover categories using environmental variables such as soil composition and elevation. In the Higgs Boson dataset, the task involves distinguishing signal events indicative of Higgs boson production from background noise. The Census Income dataset requires predicting whether an individual's income surpasses the \$50,000 threshold based on demographic attributes. This diverse selection of datasets and tasks serves to illustrate the versatility and effectiveness of our proposed method. For a more comprehensive overview of these datasets, refer to the Appendix.\ref{appendix:dataset}.
\\\textbf{Baselines.} We use DP-ADAM model for private neural networks and train it entirely on the client side, to compare our embedding-release approach against weight-release paradigm.  For the non-private baseline, we use the same models on both client and server that are used for our method, except using the loss function and creating private embeddings. We call this model NonPriv.
\\\textbf{Models.} PowerLearn (PL) refers to model trained using {\ours} method. For the Forest Cover Dataset, we train two models PowerLearn small and PowerLearn large, which differ in batch size and number of steps on the client side. PowerLearn-NeuralNetwork (PL-NN), PowerLearn-RandomForest (PL-RF) and PowerLearn-XGBoost (PL-XGB) are models trained on privated embeddings generated using {\ours} and having a Neural Network,Random Forest classifier and an XGBoost classifier on the server side respectively.

\subsection{Experiment 1: Privacy vs. Utility trade-offs} \label{subsec:priv_acc}
In order to check the utility of the resulting server's model that is obtained while preserving client's data privacy, we measure the accuracy of the network as we vary the privacy parameter $\epsilon$. Three server models are evaluated: PowerLearn-NeuralNetwork (PL-NN), PowerLearn-RandomForest (PL-RF) and PowerLearn-XGBoost (PL-XGB). We compare our approach to the weight-release baseline DP-ADAM. The results are summarized in  Table \ref{tab:priv_acc_inc}. We note that {\ours} ensures much better privacy-utility tradeoff as compared to DP-Adam. The variation is strongly correlated to the $\epsilon$ histogram as it dictates the number of training points, the server receives, thus driving the accuracies. Figure \ref{fig:eps_vs_acc} shows the variation of accuracy for the 
Forest Cover dataset. 

\begin{figure}[h!]
    \centering
    \includegraphics[width=1.0\linewidth]{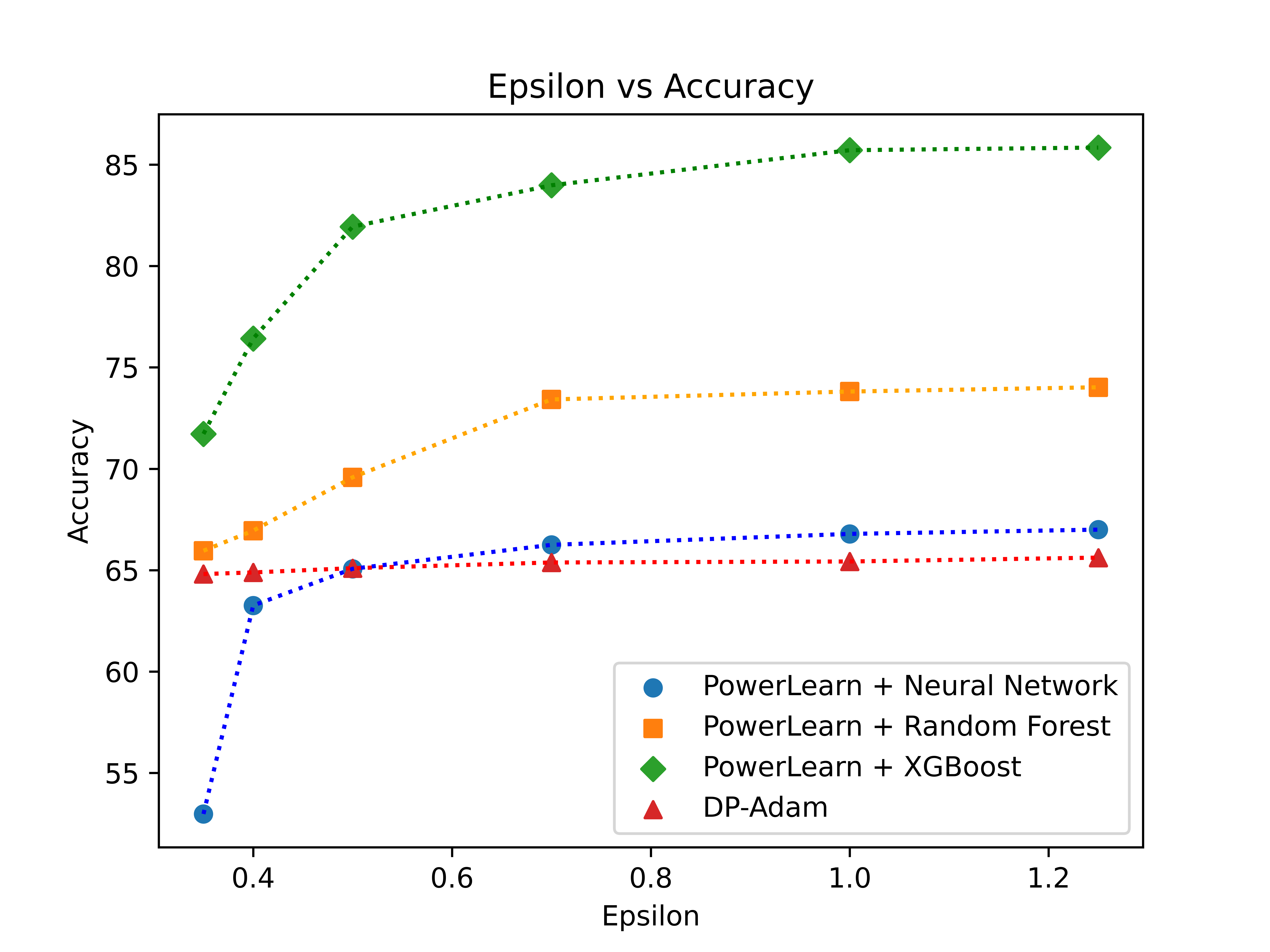}
    \caption{Epsilon vs Accuracy for Forest Cover Dataset shows that we match the performance of the DP-ADAM based neural network, while we show an increase in client resource efficiency for our method in Table \ref{tab:comp_cost}. }
    \label{fig:eps_vs_acc}
\end{figure}

\subsection{Experiment 2: Resource efficiency} \label{subsec:expt_res} We report the computational cost incurred by the client and the server and compare it with the baseline non-private approach along with DP-ADAM. Since DP-Adam does not generate private embeddings, the model needs to be trained entirely on the client side. Our results are summarized in Table \ref{tab:comp_cost}. We use the product of GPU memory requirement and number of steps to reach a particular accuracy, as a proxy to measure the computational cost to attain a certain accuracy. For example, to reach 65\% accuracy with $\epsilon=0.5$ on the Forest cover dataset, DP-Adam requires 7252 units of compute, whereas PowerLearn small requires only 694.7 units. We note that our method is successful in offloading a portion (typically a majority) of the workload to the server, while not losing out by a lot to the baseline non-private approach for faster convergence. We also note that we attain much better balancing of client-server resource efficiencies while also incurring a lesser overall computation cost against private weight release approaches.

\begin{table*}[!htbp]
    \centering
    \begin{tabular}{|c|c|c|c|c|c|} \hline 
         $\epsilon$&  \textbf{Method}&  \textbf{Accuracy}&  \textbf{Client Cost}&  \textbf{Server Cost}& \textbf{Total Cost}
\\ \hline 
         0.5&  DP-ADAM&  65&  7525&  0& 7525
\\ \hline 
         0.5&  PowerSmall&  65&  68.4&  626.3& 694.7
\\ \hline 
         0.5&  PowerLarge&  65&  155.2&  529.5& 684.7
\\ \hline 
         1&  DP-ADAM&  65&  3295.8&  0& 3295.8
\\ \hline 
         1&  PowerSmall&  65&  68.4&  547.9& 616.3
\\ \hline 
         1&  PowerLarge&  65&  155.2&  349.8& 505
\\ \hline 
         1&  PowerLarge&  66&  155.2&  1356.6& 1511.8
\\ \hline 
     $\infty$   &  NonPriv&  66&  54&  1027.9& 1081.9\\ \hline
    \end{tabular}
  \caption{Distribution of compute: Forest Cover (top table), Higgs Boson (middle table) and Adult Income Datasets (bottom table).\\} 
    \label{tab:}
    \centering
    \begin{tabular}{|c|c|c|c|c|c|} \hline 
         $\epsilon$&  \textbf{Method}&  \textbf{Accuracy}&  \textbf{Client Cost}&  \textbf{Server Cost}& \textbf{Total Cost}
\\ \hline 
         0.5&  DP-ADAM&  69&  5134.1&  0& 5134.1
\\ \hline 
         0.5&  PowerLearn&  69&  90.2&  883.7& 973.9
\\ \hline 
         1&  DP-ADAM&  70&  3295.8&  0& 3295.8
\\ \hline 
         1&  PowerLearn&  70&  90.2&  679.6& 769.8
\\ \hline 
         1&  PowerLearn&  73&  90.2&  1125.3& 1215.5
\\ \hline 
         $\infty$ &  NonPriv&  73&  49.2&  128.7& 177.9\\ \hline
    \end{tabular}
    \label{tab:comp_cost}
    \centering
    \begin{tabular}{|c|c|c|c|c|c|} \hline 
         $\epsilon$&  \textbf{Method}&  \textbf{Accuracy}&  \textbf{Client Cost}&  \textbf{Server Cost}& \textbf{Total Cost}
\\ \hline 
         1.5&  DP-ADAM&  80&  131742&  0& 131742
\\ \hline 
         1.5&  PowerLearn&  80&  29.8&  377.14& 406.94
\\ \hline 
         1.5&  PowerLearn&  82&  29.8&  13890& 13919.8
\\ \hline 
         3&  DP-ADAM&  82&  107200&  0& 107200
\\ \hline 
         3&  PowerLearn&  82&  29.8&  1077.79& 1107.59
\\ \hline 
         $\infty$&  NonPriv&  82&  16.8&  672.97& 689.77\\ \hline
    \end{tabular}
    \label{tab:comp_cost}
\end{table*}

\subsection{Experiment 3: Performance of proposed defense against attacks} We study the empirical privacy leakage of our embedding and compare them against the embeddings released by non-private and DP-ADAM trained  models. Feature space hijacking \cite{pasquini2021unleashing} is a popular attack on embeddings and has been successful in reconstructing training data. We try to simulate this attack by assuming a malicious server, with access to public data points which follow similar data distribution as the training data used. To gauge the success of the attack, we evaluate the percentage of samples for which, the server was able to reconstruct the categorical feature from the embeddings. We show that PowerLearn has a leakage on only $0.36\%$ of samples while DP-ADAM has a substantial leakage on $4.5\%$ of the samples. The results are summarized in \ref{tab:attack_tab}
\begin{table}[h!]
\centering
    \begin{tabular}{|c|c|c|} \hline 
         \textbf{Model}& \textbf{Accuracy} &  \textbf{MSE} \\ \hline 

         Non Private&  3.63 \%&  0.0008\\ \hline 
         DP-ADAM&  4.5 \%&  0.0019\\ \hline 
         PowerLearn&  0.36 \%&  0.2416\\ \hline
    \end{tabular}
      \caption{Comparison of the defenses on a popular reconstruction attack applicable to our setting called the feature space hijacking attack (FSHA).}
    \label{tab:attack_tab}%
\end{table}
\subsection{Experiment 4: Lipschitz privacy loss evaluation} We evaluate the theoretical privacy leakage, using our lipschitz loss on the embeddings and compare it to embeddings generated without using lipschitz privacy loss term (non-private baseline) and upon using DP-ADAM to train the embeddings. Our results are summarized in the four Figures in \ref{fig:priv_eps} and \ref{fig:test_hist} . We show that PowerLearn achieves a higher privacy level than DP-ADAM over the activations. It is also worth noting that DP-ADAM is a method to provide a chosen level of privacy through the model weights. DP-ADAM does not provide any theoretical privacy guarantee on the activations. Thereby, this further showcases the gap over existing methods such as (DP-SGD, DP-ADAM or DP-FTRL) that our method is filling in on for private activation release. We also see that the histograms do not vary much when the learnt privatization network is applied on train and test sets in order to release the corresponding private activations. This empirically showcases a good generalization of the privatization capability of our approach.
\begin{figure*}[!htbp]
    \centering
    \begin{floatrow}
    \includegraphics[scale=0.4]{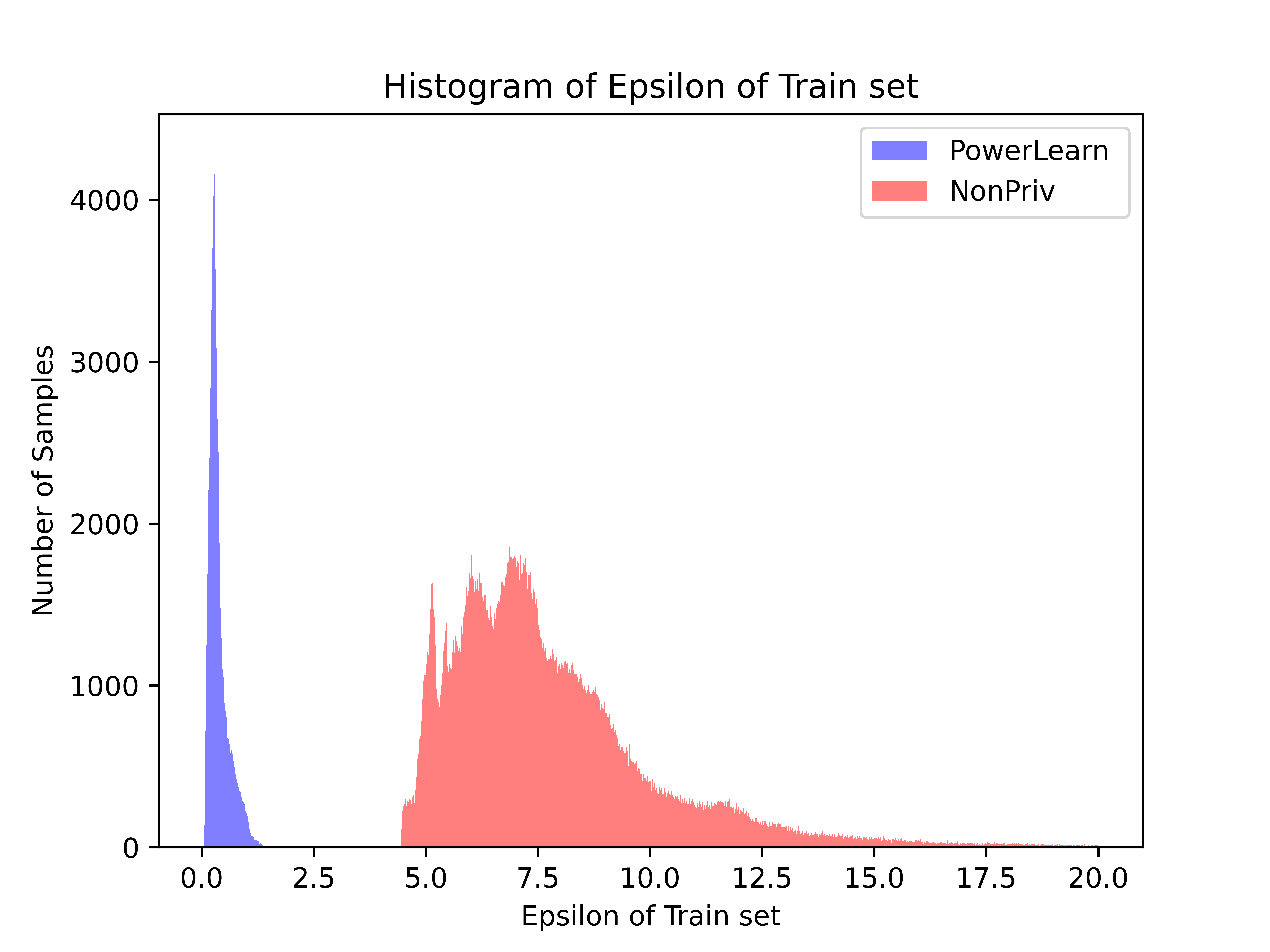}
      \caption{Comparison of histograms of $\epsilon$ between PowerLearn and baseline approaches}
    \label{fig:enter-label1}
    \includegraphics[scale=0.4]{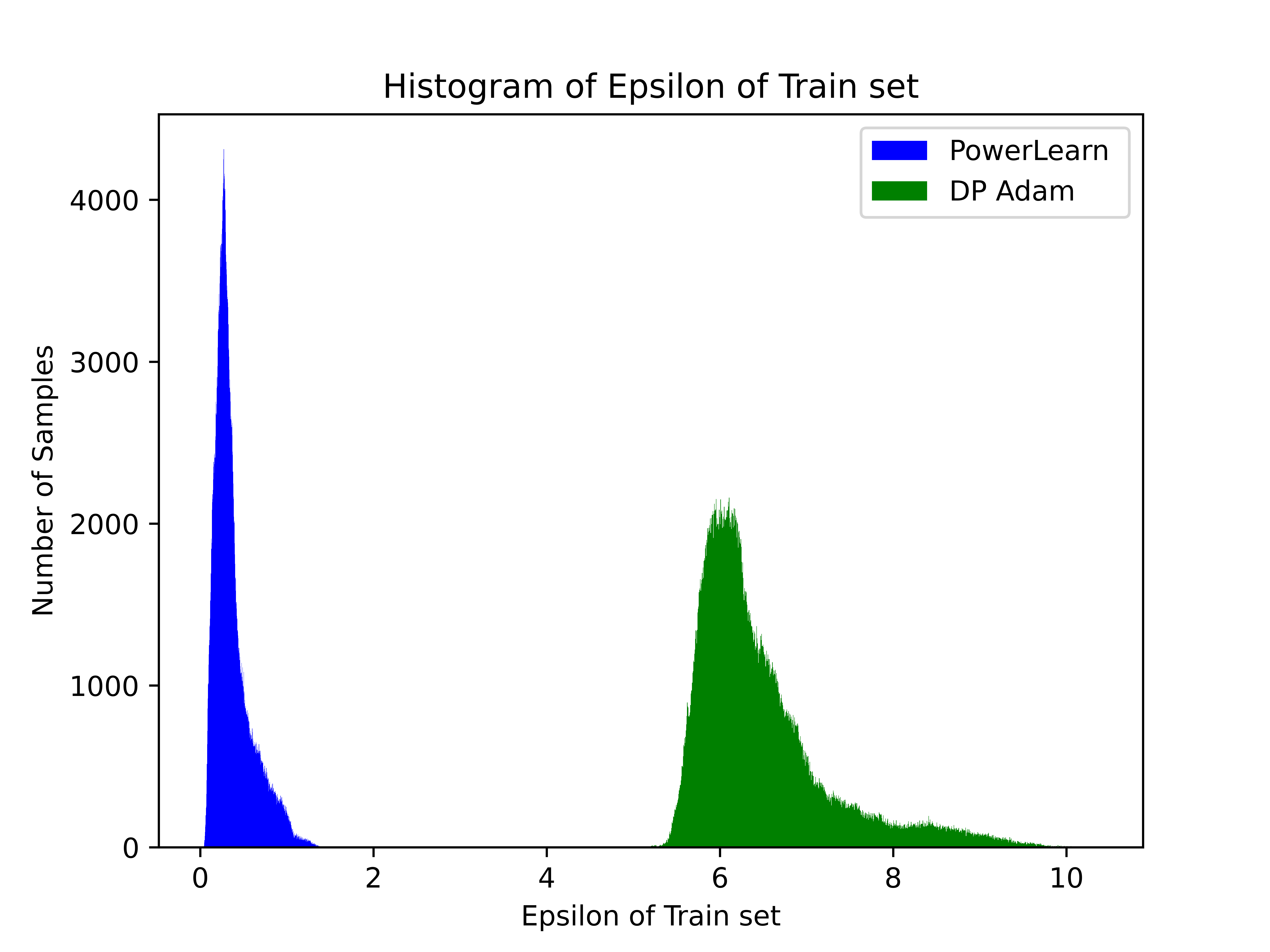}
    \label{fig:priv_eps}
    \end{floatrow}
\end{figure*}

\subsection{Experiment 5: Choice of power.} We use p=1 for most of our experiments, as the client model is smallest in this case. In this part, we try to analyze the effects of varying p and see the convergence of the privacy loss. Note that while the network depth increases, the number of parameters to train remains the same as we still use the same $H$ for multiple power iterations. The results of our experiments on Higgs Boson Dataset are summarized in Figure \ref{fig:p_vs_priv} where the privatization performance is shown to improve with increasing $p$.



\begin{figure}[h!]
    \centering
    \includegraphics[width=\linewidth]{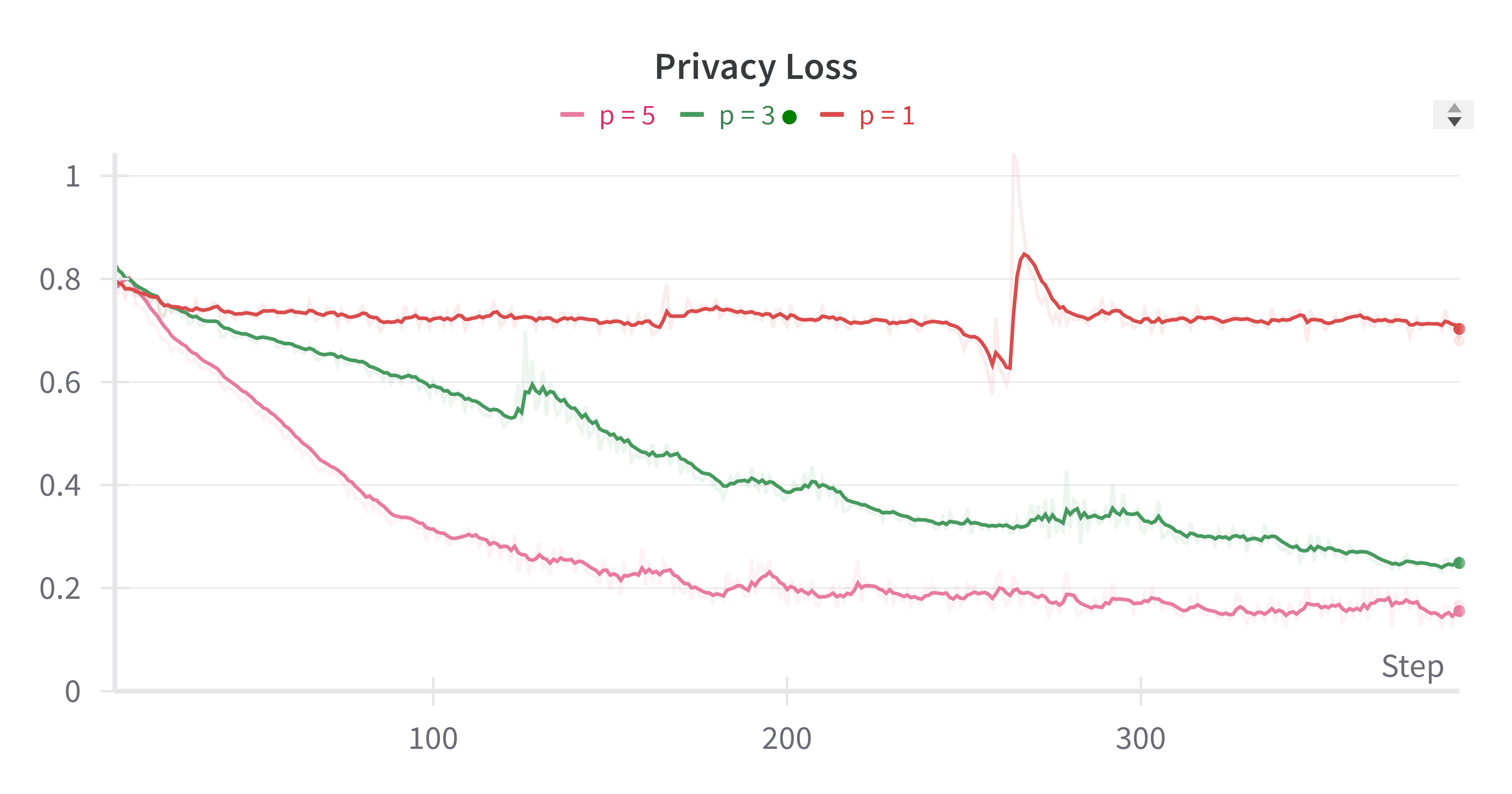}
    \caption{Choice of power $p$ vs. Privacy loss}
    \label{fig:p_vs_priv}
\end{figure}

\subsection{Experiment 6: Convergence of utility, privacy and joint losses} In this additional experiment, we evaluate how the  client model tries to minimize the proposed privacy loss and utility loss jointly. The difference in the two models is the batch size used in training. As we can observe in Figure \ref{fig:priv_loss_conv} in the Appendix that the PowerLearn Large model is able to perform better on both the losses, while incurring more computational cost on the client. When we move to the server models, as shown in Figure \ref{fig:priv_acc_conv}, we observe the convergence of accuracies of the PowerLearn models. The figure  helps explain the computational expense advantage of PowerLearn over DP-ADAM, which reaches competitive accuracies slower. Additionally, we notice that while test accuracy varies by changing $\epsilon$ , the convergence is still at the same rate for all the PowerLearn models.

\section{Additional Experimental Details} \label{experimentsDetails}
\subsubsection{Datasets}\label{appendix:dataset}
We detail the datasets which were used in Section \ref{sec:exp} and summarize them in Table \ref{table:datasets}. All of the three datasets are licensed under a Creative Commons Attribution 4.0 International (CC-BY 4.0) license.
\begin{itemize}
    \item \textbf{Adult Income.} The Adult Income Dataset \cite{income_dataset}, also known as the Census Dataset, contains information extracted by Barry Becker from the 1994 Census database. A set of reasonably clean records was obtained using basic filters. The dataset aims to predict whether an individual's annual income exceeds \$50,000, based on factors such as education level, age, gender, and occupation. It includes over 48,000 samples and is divided into two classes. 
    \item \textbf{Forest Cover.} The Forest Cover Dataset \cite{covertype_dataset} is used to predict forest cover type based on cartographic variables alone, without remotely sensed data. Each observation represents a 30 x 30 meter cell, with forest cover type determined from US Forest Service (USFS) Region 2 Resource Information System (RIS) data. The independent variables, derived from US Geological Survey (USGS) and USFS data, include both continuous attributes like elevation, aspect, and slope, and binary variables for qualitative data such as wilderness areas and soil types, encoded as 0 or 1. The dataset is unscaled and comprises approximately 580,000 samples classified into seven forest cover types. The study area covers four wilderness areas in the Roosevelt National Forest of northern Colorado, where minimal human disturbances allow forest cover types to reflect natural ecological processes. 
    \item \textbf{Higgs Boson.} The Higgs Boson Dataset \cite{higgs_dataset} is generated using Monte Carlo simulations. The first 21 features represent kinematic properties measured by particle detectors in the accelerator, while the last seven features are high-level functions derived from the first 21, designed by physicists to aid in class discrimination. The dataset contains 240,000 training samples, which we used for all our experiments, and consists of two classes.
\end{itemize}

\begin{table}[H]
\centering
\caption{
  Summary of the datasets and tasks used in our empirical setup.
  }
\label{table:datasets} 
\vspace{0.5em}
\begin{tabular}{@{}lccc@{}}
\toprule
\textbf{Dataset} & \textbf{\# Samples}& \multicolumn{1}{l}{\textbf{\# Features}} & \multicolumn{1}{l}{\textbf{\# Classes}} \\ \midrule
Forest Cover  &580k& 54& 7\\
Higgs Boson  &240k& 30& 2\\
Adult Income  &48k& 14& 2\\
\end{tabular}
\end{table}

\subsection{Experimental Settings}\label{appendix:moredetails}

In all experiments, we use an $80\%-20\%$  split for the dataset. Initially, the training data points are divided, and embeddings are created using the client model, followed by applying the same split on the server model. The client model consists of two networks: the first network learns the $H$ matrix for each data point using a neural network, while the second network minimizes the utility cost of the embeddings. To measure the client-server work split, the server model is consistently a neural network. Additionally, we use an XGBoost classifier and a random forest classifier on the server-side embeddings to study the privacy-utility trade-off. The DP-ADAM model always has the same architecture as the server neural network. Our non-private baseline maintains the same architecture for both the server and client models. The training configuartion for all the datasets and models is given in Tables \ref{tab:clie_mod} and \ref{tab:serv_mod}.

For each dataset in the privacy-utility tradeoff experiment, we report both the average test accuracy/MSE and its corresponding one standard error based on multiple runs.  We measure the client and server cost using the product of GPU RAM utilization and number of steps to reach a particular accuracy. The unit of measurement in all the tables across datasets is terabytes (steps).  

Finally, in the experiment about defence against feature hijack attack, we first learn a function to map the embeddings to the original points using an autoencoder. We then find the indices with maximum value in the decoder output and assign the one hot encodings of the categorical features. We use this reconstructed vector and measure it's similarity to the private data. In Table \ref{tab:attack_tab} ,  we measure the number of points for which the attack is successful in getting the categorical features and report the accuracy. The mean squared error and correlation coefficient are calculated between all the corresponding reconstructed and original private data points.

\begin{table*}[!htbp]
    \centering
    \begin{tabular}{|c|c|c|c|c|}
        \hline
        \textbf{Dataset} & \textbf{Model Name} & \textbf{\#Steps} & \textbf{Batch size} & \textbf{Learning rate} \\ \hline
        \multirow{4}{*}{Forest Cover Type} & PowerLearn Small & 100 & 128 & 0.0003 \\ \cline{2-5}
        & PowerLearn Large & 100 & 512 & 0.0003 \\ \cline{2-5}
        & DP-ADAM & 10k & 4096 & 0.0003 \\ \cline{2-5}
        & Non Private & 100 & 128 & 0.0003 \\ \hline
        \multirow{3}{*}{Higgs Boson} & PowerLearn Large & 100 & 512 & 0.0003 \\ \cline{2-5}
        & DP-ADAM & 10k & 4096 & 0.001 \\ \cline{2-5}
        & Non Private & 100 & 512 & 0.0003 \\ \hline
        \multirow{3}{*}{Adult Income} & PowerLearn Large & 30 & 128 & 0.001 \\ \cline{2-5}
        & DP-ADAM & 100k & 512 & 0.001 \\ \cline{2-5}
        & Non Private & 30 & 128 & 0.001 \\ \hline
    \end{tabular}
    \caption{Client Model Details}
    \label{tab:clie_mod}
\end{table*}

\begin{table*}[!htbp]
    \centering
    \begin{tabular}{|c|c|c|c|c|}
        \hline
        \textbf{Dataset} & \textbf{Model Name} & \textbf{\#Steps} & \textbf{Batch size} & \textbf{Learning rate} \\ \hline
        \multirow{3}{*}{Forest Cover Type} & PowerLearn Small& 10k& 4096& 0.0003
\\ \cline{2-5}
        & PowerLearn Large& 10k& 4096& 0.0003
\\ \cline{2-5}
        & Non Private& 10k& 4096& 0.0003
\\ \hline
        \multirow{2}{*}{Higgs Boson} & PowerLearn Large& 10k& 4096& 0.0003
\\ \cline{2-5}
        & Non Private& 10k& 4096& 0.0003
\\ \hline
        \multirow{2}{*}{Adult Income} & PowerLearn Large& 15k& 512& 0.001
\\ \cline{2-5}
        & Non Private& 15k& 512& 0.001\\ \hline
    \end{tabular}
    \caption{Server Model Details}
    \label{tab:serv_mod}
\end{table*}

\subsection{Hardware \& Code} \label{harwareCode}
Our experiments were carried out on a single  NVIDIA A100-SXM4-80GB GPU. The algorithms are implemented in Python using PyTorch \cite{paszke2019pytorch}. The code is available at 
\url{https://anonymous.4open.science/r/Power-Mechanism-new-submit-6039/}{https://anonymous.4open.science/r/Power-Mechanism-new-submit-6039/}

\section{Conclusion} The proposed {\ours} fills in the gap in the literature on differential privacy preserving schemes for release of activations from a neural network. Current works instead deal with differentially private release of model weights. We extensively evaluate and show the benefits of our holistic approach based on a co-design of distributed and private machine learning aspects of the problem. We show substantial improvements in the privacy-utility trade-offs and resource efficiencies of our method in comparison to several baselines.

\section{Limitations and Future Work}\label{app_limit} Although our method theoretically applies in principle to many classes of learnable data transformations that induce privacy, we have solely focused in this work on tabular datasets in terms of our experimental setups and evaluations. Applying our approach to other modalities such as speech, text and vision will require some more co-design between the architectural aspects of the models and the theoretical aspects of inducing privacy on objects such as word or sentence embeddings for example, in a semantically meaningful way at the same time. This is out of scope for the main focus of this paper, and this would be the main focus of our future works. That said, our setup applied to tabular data itself results in several important applications for privacy-preserving collaborative learning, as many datasets in the real-world are tabular.



\bibliography{tmlr}
\bibliographystyle{tmlr}

\appendix
\section{Proofs}
\subsection{Proof of Theorem \ref{equiTheorem}: Equivalence of privacy} \label{equiProof}
\begin{proof}
Fix \( x, x' \in \mathbb{R}^d \), and let \( z \in \mathcal{Z} \) be arbitrary. Define the scalar function
\[
\phi(t) := \ln g(x + t(x' - x), z), \quad t \in [0,1].
\]
Then \( \phi \) is differentiable, since \( \ln g(\cdot, z) \) is differentiable.

By the Mean Value Theorem, there exists \( t_0 \in (0,1) \) such that
\[
\phi(1) - \phi(0) = \phi'(t_0).
\]
Thus,
\[
\ln g(x', z) - \ln g(x, z) = \phi'(t_0).
\]

Using the chain rule,
\[
\phi'(t_0) = \left\langle \nabla_x \ln g(x + t_0(x' - x), z), x' - x \right\rangle.
\]

Applying the Cauchy--Schwarz inequality for the Euclidean inner product,
\[
|\ln g(x', z) - \ln g(x, z)| \leq \left\| \nabla_x \ln g(x + t_0(x' - x), z) \right\|_2 \cdot \|x' - x\|_2.
\]

By the assumption that \( \| \nabla_x \ln g(x, z) \|_2 \leq \varepsilon \) for all \( x \), we conclude
\[
|\ln g(x', z) - \ln g(x, z)| \leq \varepsilon \|x' - x\|_2.
\]

This proves that \( \ln g(\cdot, z) \) is \( \varepsilon \)-Lipschitz with respect to the Euclidean norm, and hence that the mechanism satisfies Lipschitz privacy.
\end{proof}

\section{Derivation of privacy inducing loss}
\subsubsection{Privacy proof}
\begin{proof}\label{powProof}
The equation $\mathbf{z} = G(\mathbf{x})$ can be unrolled as
\begin{equation}
    \mathbf{z} = g_{p-1} \circ  g_{p-2} \circ \dots \circ g_0 (\mathbf{x})
\end{equation} where $g_{p-1} \circ  g_{p-2}(\cdot)=\mathbf{H}(g_{p-2}(\cdot)).g_{p-2}(\cdot)$. If $g_k$ is a one-to-one function on the support of $\mathbf{X}$ whose pdf is given by $f_{\mathcal{X}}(x)$ where $ x \in \mathbb{R}^k$, then the pdf of $\mathbf{\mathbf{z} = G(\mathbf{x})}$ is  
$$ h_{\mathcal{Z}}(\mathbf{z}) = f_{\mathcal{X}}(G^{-1}(\mathbf{z})) |\det(\mathbf{J}(G^{-1}(\mathbf{z})))|$$ 
for $\mathbf{z}$ in the range of $G$, where $\mathbf{J(x)}$ is the Jacobian matrix of $G$ that is evaluated at $\mathbf{x}$. This is classically known as the multidimensional change of variable theorem in the context of probability density functions.  
But since we have $g_{p-1} \circ g_{p-2} \circ \dots \circ g_0 (\mathbf{x})$ instead of a single $G(\cdot)$, this can be written as 
$$h_{\mathcal{Z}}(\mathbf{z}) 
=  h_{\mathcal{W}_{p-1}}(g_{p-1}^{-1}(\mathbf{z})) \left\vert \det\frac{\partial \mathbf{w}_{p-1} }{\partial\mathbf{z} } \right\vert $$ 
which is the same as the following.
$$h_{\mathcal{Z}}(\mathbf{z}) 
=  h_{\mathcal{W}_{p-1}}(\mathbf{w}_{p-1})\left\vert \det\frac{\partial \mathbf{w}_{p-1} }{\partial\mathbf{z} } \right\vert $$ Upon applying a logarithm, we get the following.
$$ \log(h_{\mathcal{Z}}(\mathbf{z}))
=  \log(h_{\mathcal{W}_{p-1}}(\mathbf{w}_{p-1})) + \log(\left\vert \det\frac{\partial \mathbf{w}_{p-1} }{\partial\mathbf{z} } \right\vert) $$ After writing the last terms in terms of a reciprocal, we have the following.
$$ \log(h_{\mathcal{Z}}(\mathbf{z}))
=  \log(h_{\mathcal{W}_{p-1}}(\mathbf{w}_{p-1})) - \log(\left\vert \det\frac{ \partial\mathbf{z}}{\partial \mathbf{w}_{p-1}} \right\vert) $$ Now writing this in terms of the recursive composition that goes into generating $\mathbf{z}$, we get the following.
$$
\begin{alignedat}{2}
&\log(h_{\mathcal{Z}}(\mathbf{z})) = \log(h_{\mathcal{W}{p-2}}(\mathbf{w}{p-2})) \\
&\quad - \log(\left\vert \det\frac{ \partial\mathbf{w}{p-1}}{\partial \mathbf{w}{p-2}} \right\vert) \
&\quad - \log(\left\vert \det\frac{ \partial\mathbf{z}}{\partial \mathbf{w}_{p-1}} \right\vert)
\end{alignedat}
$$ 
Writing this in terms of a summation of Jacobians, we get the following.
$$\log(h_{\mathcal{Z}}(\mathbf{z}) 
= \log(f_{\mathcal{X}}(\mathbf{x})) - \sum_{k=0}^{p-1}\log \left(\left\vert \det \mathbf{J}_k \right\vert \right) $$ 
Now expressing this equation in terms of the condition needed towards Lipschitz privacy we get the following.
$$   \frac{\partial}{\partial \mathbf{x}}\log h_{\mathcal{Z}}(\mathbf{z})=  \frac{\partial}{\partial \mathbf{x}} \log f_{\mathcal{X}}(\mathbf{x}) -  \frac{\partial}{\partial \mathbf{x}} \sum_{k=0}^{p-1}  \log(|\det(\mathbf{J}_k)) $$
As the final Lipschitz privacy condition is based on a norm, we re-express it accordingly as follows which is our proposed condition that forms the privacy-inducing loss for our privatization network.
$$  \left\lVert \frac{\partial}{\partial \mathbf{x}}\log h_{\mathcal{Z}}(\mathbf{z}) \right \rVert = \left\lVert 
\frac{1}{f_{\mathcal{X}}(\mathbf{x})} \frac{\partial f_{\mathcal{X}}(\mathbf{x})}{\partial \mathbf{x}} -  \frac{\partial}{\partial \mathbf{x}} \sum_{k=0}^{p-1}  \log(|\det(\mathbf{J}_k))\right \rVert \leq \epsilon$$
\end{proof}
\section{Calibrating $\delta$ of Differential Privacy}

Since we have a high probability but approximate bound on the interval of the true density function, we have to account for the probability with which the privacy leaks. Let event $E$ be the event that the true probability lies within the confidence interval with high probability $ 1 - \alpha$. Now, our mechanism acting on input $M(x)$ can behave under two cases. In one case, it obeys differential privacy (denoted by event $T$). Then by the \textit{Law of Total Probability}, we have

$$ \probP[M(x) \in T] = \probP[M(x) \in T | E]\probP[E] + \probP[M(x) \in T | !E]\probP[!E] $$ 
Now the probability of event $E$ occuring is $\alpha$ and $1-\alpha$ otherwise. Therefore we have,

$$ \probP[M(x) \in T] = \probP[M(x) \in T | E](1 - \alpha) +  \probP[M(x) \in T | !E]\alpha $$  This simplifies to be
$$
\begin{alignedat}{2}
&\probP[M(x) \in T] = \probP[M(x) \in T | E] + \\
&\quad \alpha(\probP[M(x) \in T | !E] - \probP[M(x) \in T | E])
\end{alignedat}
$$
$$ \therefore \probP[M(x) \in T] \leq \probP[M(x) \in T | E] +  \alpha  $$ 
Now using the definition of $\epsilon - \delta $ Differential Privacy we know
$$  \probP[M(x) \in T | E] \leq  e^{\epsilon} \probP[M(x') \in T | E] + \delta  $$ Now as we are so far operating with a $\delta = 0$, we therefore have 
$$ \probP[M(x) \in T] \leq e^{\epsilon}\probP[M(x') \in T | E] + \delta  +  \alpha  $$ 
which gives us,

$$ \probP[M(x) \in T] \leq e^{\epsilon}\probP[M(x') \in T | E] +  \alpha  $$ 

\section{Supporting lemmas for the reconstruction lower-bound}
\begin{proof}
We begin by recalling that the squared Euclidean norm of a vector $v \in \mathbb{R}^d$ is defined as
\[
\|v\|^2 = \sum_{i=1}^d v_i^2.
\]
Applying this to the random vector $A(z) - \mu(x)$, we write:
\[
\|A(z) - \mu(x)\|^2 = \sum_{i=1}^d (A_i(z) - \mu_i(x))^2.
\]
Now take expectation over $z \sim p_Z(\cdot \mid x)$:
\begin{align*}
\mathbb{E}_{z}[\|A(z) - \mu(x)\|^2] &= \mathbb{E}_{z}\left[\sum_{i=1}^d (A_i(z) - \mu_i(x))^2\right] \\
&= \sum_{i=1}^d \mathbb{E}_{z}\left[(A_i(z) - \mu_i(x))^2\right] = \sum_{i=1}^d \operatorname{Var}[A_i(z) \mid x].
\end{align*}
By the definition of the conditional covariance matrix,
\[
\operatorname{Cov}(A(z) \mid x) = \mathbb{E}_z[(A(z) - \mu(x))(A(z) - \mu(x))^T],
\]
which is a $d \times d$ matrix whose $(i,j)$ entry is
\[
\operatorname{Cov}_{ij}(A(z) \mid x) = \mathbb{E}[(A_i(z) - \mu_i(x))(A_j(z) - \mu_j(x))].
\]
Hence, the trace of the covariance matrix is
\[
\operatorname{Tr}(\operatorname{Cov}(A(z) \mid x)) = \sum_{i=1}^d \operatorname{Cov}_{ii}(A(z) \mid x) = \sum_{i=1}^d \operatorname{Var}[A_i(z) \mid x].
\]
Combining both results, we conclude that
\[
\mathbb{E}_z[\|A(z) - \mu(x)\|^2] = \operatorname{Tr}(\operatorname{Cov}(A(z) \mid x)). \qed
\]
\end{proof}
\begin{proof}
We first recall that $\nabla_x \log f_X(x) = \frac{\nabla_x f_X(x)}{f_X(x)}$. Therefore,
\[
\mathbb{E}_{x \sim f_X}[\nabla_x \log f_X(x)] = \int \nabla_x \log f_X(x) f_X(x) dx = \int \nabla_x f_X(x) dx.
\]
Now apply the divergence theorem over $\mathbb{R}^d$, assuming sufficient decay of $f_X(x)$ and its gradient:
\[
\int_{\mathbb{R}^d} \nabla_x f_X(x) dx = \lim_{R \to \infty} \int_{B_R(0)} \nabla_x f_X(x) dx = \lim_{R \to \infty} \int_{\partial B_R(0)} f_X(x) \, dS(x) = 0.
\]
This holds if $f_X(x)$ decays to zero faster than any polynomial. Therefore,
\[
\mathbb{E}_{x \sim f_X}[\nabla_x \log f_X(x)] = 0. \qed
\]
\end{proof}
\section{Lower bound on reconstruction error}
\begin{proof}
Using the decomposition from the law of total expectation and Lemma 1
\[
R(A) = \mathbb{E}_x \left[ \operatorname{Tr}(\operatorname{Cov}(A(z) \mid x)) + \|\mu(x) - x\|^2 \right].
\]

Upon applying the van Trees inequality (which holds under regularity and Lemma 2), we get
\[
\operatorname{Cov}(A(Z)) \succeq J_\mu(x)(\mathcal{I}_{Z|X}(x) + \mathcal{I}(f_X))^{-1} J_\mu(x)^T.
\]

Taking the trace of both sides yields
\[
\operatorname{Tr}(\operatorname{Cov}(A(z) \mid x)) \geq \operatorname{Tr}(J_\mu(x)(\mathcal{I}_{Z|X}(x) + \mathcal{I}(f_X))^{-1} J_\mu(x)^T).
\]
Substitute this back into the expression for \( R(A) \)
\[
R(A) \geq \mathbb{E}_x[\operatorname{Tr}(J_\mu(x)(\mathcal{I}_{Z|X}(x) + \mathcal{I}(f_X))^{-1} J_\mu(x)^T) + \|\mu(x) - x\|^2].
\]

Now assume \( \mu(x) = x \), so that \( J_\mu(x) = I_d \). Then
\[
R(A) \geq \operatorname{Tr}((\mathcal{I}_{Z|X}(x) + \mathcal{I}(f_X))^{-1}).
\]

By Lemma 3, and our assumption that the nullspaces of \( \mathcal{I}_{Z|X}(x) \) and \( \mathcal{I}(f_X) \) intersect trivially, the matrix \( M = \mathcal{I}_{Z|X}(x) + \mathcal{I}(f_X) \) is symmetric positive definite.

Let \( \lambda_1, \dots, \lambda_d > 0 \) denote the eigenvalues of \( M \). By Jensen’s inequality for convex functions applied to the eigenvalues we get
\[
\operatorname{Tr}(M^{-1}) = \sum_{i=1}^d \frac{1}{\lambda_i} \geq \frac{d^2}{\sum_{i=1}^d \lambda_i} = \frac{d^2}{\operatorname{Tr}(M)}.
\]

By assumption of Lipschitz privacy,
\[
\|\nabla_x \log p_Z(z \mid x)\|^2 \leq \varepsilon^2 \Rightarrow \operatorname{Tr}(\mathcal{I}_{Z|X}(x)) \leq \varepsilon^2.
\]

Therefore we have,
\[
\operatorname{Tr}(M) \leq \varepsilon^2 + \operatorname{Tr}(\mathcal{I}(f_X))
\Rightarrow R(A) \geq \frac{d^2}{\varepsilon^2 + \operatorname{Tr}(\mathcal{I}(f_X))}.
\]
\qed
\end{proof}
\begin{remark}
The assumption that the null spaces of \( \mathcal{I}_{Z|X}(x) \) and \( \mathcal{I}(f_X) \) intersect only at zero is essential. Without it, the matrix \( M = \mathcal{I}_{Z|X}(x) + \mathcal{I}(f_X) \) may be singular, and its inverse, as required in the theorem statement, would not exist. This is not merely a technicality, but a fundamental requirement to ensure that the van Trees inequality yields a meaningful finite lower bound. This assumption is often mild in practice. Specifically, if the prior Fisher information matrix \( \mathcal{I}(f_X) \) is strictly positive definite, i.e., \( \mathcal{I}(f_X) \succ 0 \), then its null space is trivial. This holds for any prior with full support and differentiable density, such as multivariate Gaussians or Laplace distributions. In this case, \( \operatorname{null}(\mathcal{I}(f_X)) = \{0\} \), and so the intersection with any other null space is automatically trivial. 

Thus, the assumption holds generically unless both the prior and the mechanism are degenerate in the same direction. When this degeneracy does occur, reconstruction is impossible in that direction, and the bound degenerates as expected.
\end{remark}

\begin{proof}
Let $f(x)$ be the true density, and let $\widehat{f}(x)$ be the kernel density estimator constructed from the samples $x_1, \dots, x_n$. Denote by $\nabla f(x)$ and $\nabla \widehat{f}(x)$ their gradients. Then the score function is given by $s(x) = \nabla f(x)/f(x)$ and the estimated score is $\widehat{s}(x) = \nabla \widehat{f}(x)/\widehat{f}(x)$.
Define the errors,
\[
\delta(x) := \widehat{f}(x) - f(x), \quad \delta^{(1)}(x) := \nabla \widehat{f}(x) - \nabla f(x).
\]
We want to estimate the deviation between the estimated score and the true score as below
\[
\widehat{s}(x) - s(x) = \frac{\nabla \widehat{f}(x)}{\widehat{f}(x)} - \frac{\nabla f(x)}{f(x)}.
\]
Using the identity for the difference of ratios,
\[
\frac{a + \delta a}{b + \delta b} - \frac{a}{b} \approx \frac{\delta a}{b} - \frac{a \delta b}{b^2},
\]
we get,
\[
\widehat{s}(x) - s(x) \approx \frac{\delta^{(1)}(x)}{f(x)} - \frac{\nabla f(x) \delta(x)}{f(x)^2}.
\]
Taking squared norms and expectations,
\[
\mathbb{E} \left[ \|\widehat{s}(x) - s(x)\|^2 \right] \leq 2 \mathbb{E} \left[ \left\| \frac{\delta^{(1)}(x)}{f(x)} \right\|^2 \right] + 2 \mathbb{E} \left[ \left\| \frac{\nabla f(x) \delta(x)}{f(x)^2} \right\|^2 \right].
\]
From standard KDE theory,
\[
\mathbb{E} \left[ \|\delta^{(1)}(x)\|^2 \right] = O\left(\frac{1}{n h^{d+4}}\right), \quad \mathbb{E} \left[ \delta(x)^2 \right] = O\left(\frac{1}{n h^d}\right).
\]
Hence the leading term in estimating the squared error in the score is,
\[
\mathbb{E} \left[ \|\widehat{s}(x) - s(x)\|^2 \right] = O\left( \frac{1}{n h^{d+4}} \right).
\]
This implies that the outer product $\widehat{s}(x)\widehat{s}(x)^\top$ differs from $s(x)s(x)^\top$ by a matrix with entries that deviate by $O(1 / (n h^{d+4}))$ in expectation. Averaging these over $n$ samples yields the deviation in the trace of the Fisher information estimate,
\[
\left|\operatorname{Tr}(\widehat{\mathcal{I}}(f_X)) - \operatorname{Tr}(\mathcal{I}(f_X))\right| = O\left( \frac{1}{n h^{d+4}} \right).
\]
We denote the constant factor in this bound by $c_1^2$ and substituting this deviation into the lower bound from Theorem 3, we get
\[
\mathcal{R}(A) \geq \frac{d^2}{\varepsilon^2 + \operatorname{Tr}(\widehat{\mathcal{I}}(f_X)) + \frac{c_1^2}{n h^{d+4}}}.
\]
\qed
\end{proof}
\end{document}